\def\RR{{\mathbb{R}}}
\DeclareMathOperator*{\argmin}{arg\,min}
\def\XX{\mathcal{X}}
\def\EE{\mathbb{E}}
\begin{document}

\title{On the Consistency of Ordinal Regression Methods}

\author{\name Fabian Pedregosa \email f$@$bianp.net\\
      \addr INRIA \\
      D\'epartement d'informatique de l'ENS, \'Ecole normale sup\'erieure, CNRS, PSL Research University\\
      Paris, France       \AND
       \name Francis Bach \email francis.bach$@$ens.fr \\
       \addr INRIA \\
       D\'epartement d'informatique de l'ENS, \'Ecole normale sup\'erieure, CNRS, PSL Research University\\
      Paris, France       \AND
       \name Alexandre Gramfort \email alexandre.gramfort$@$inria.fr \\
       \addr LTCI, T\'el\'ecom ParisTech, Universit\'e Paris-Saclay \\
       INRIA, Universit\'e Paris-Saclay\\
       Saclay, France}

\editor{Tong Zhang}

\maketitle

\begin{abstract}
Many of the ordinal regression models that have been proposed in the literature can be seen as methods that minimize a convex surrogate of the zero-one, absolute, or squared loss functions. A key property that allows to study the statistical implications of such approximations is that of \emph{Fisher consistency}. Fisher consistency is a desirable property for surrogate loss functions and implies that in the population setting, i.e., if the probability distribution that generates the data were available, then optimization of the surrogate would yield the best possible model.
In this paper we will characterize the Fisher consistency of a rich family of surrogate loss functions used in the context of ordinal regression, including support vector ordinal regression, ORBoosting and least absolute deviation. We will see that, for a family of surrogate loss functions that subsumes support vector ordinal regression and ORBoosting, consistency can be fully characterized by the derivative of a real-valued function at zero, as happens for convex margin-based surrogates in binary classification. We also derive excess risk bounds for a surrogate of the absolute error that generalize existing risk bounds for binary classification. Finally, our analysis suggests a novel surrogate of the squared error loss. We compare this novel surrogate with competing approaches on 9 different datasets. Our method shows to be highly competitive in practice, outperforming the least squares loss on 7 out of 9 datasets.
\end{abstract}
\begin{keywords}
  Fisher consistency, ordinal regression, calibration, surrogate loss, excess risk bound.
\end{keywords}

\section{Introduction}

In ordinal regression the goal is to learn a rule to predict labels from an ordinal scale, i.e., labels from a discrete but ordered set. This arises often when the target variable consists of human generated ratings, such as (``do-not-bother'' $\prec$ ``only-if-you-must'' $\prec$ ``good'' $\prec$ ``very-good'' $\prec$ ``run-to-see'') in movie ratings~\citep{crammer2001pranking},  (``absent'' $\prec$ ``mild'' $\prec$ ``severe'') for the symptoms of a physical disease~\citep{ARMSTRONG01011989} and the NRS-11 numeric rating scale for clinical pain measurement~\citep{PAPR:PAPR3034}. Ordinal regression models have been successfully applied to fields  as diverse as econometrics~\citep{greene1997econometric}, epidemiology~\citep{ananth1997regression}, fMRI-based brain decoding~\citep{doyle2013multivariate} and collaborative filtering~\citep{Rennie}.

Ordinal regression shares properties--and yet is fundamentally different--from both multiclass classification and regression. As in the multiclass classification setting, the target variables consist of discrete values, and as in the regression setting (but unlike the multiclass setting) there is a meaningful order between the classes. If we think of the symptoms of a physical disease, it is clear that if the true label is ``severe'' it is preferable to predict ``mild'' than ``absent''. Ordinal regression models formalize this notion of order by ensuring that predictions farther from the true label incur a greater penalty than those closer to the true label.

The ordinal regression approach also shares properties with the learning-to-rank problem~\citep{liu2011learning}, in which the goal is to predict the relative order of a sequence of instances. Hence, this approach focuses on predicting a relative order while ordinal regression focuses on predicting a label for each instance. In this sense, it is possible for a ranking model (but not for an ordinal regression one) that predicts the wrong labels to incur no loss at all, as long as the relative order of those labels are correct, e.g. if the prediction is given by the true label plus an additive bias. Although ordinal regression and ranking are different problems, the distinction between both has not always been clear, generating some confusion. For example, in the past some methods presented with the word ``ranking'' in the title would be considered today ordinal regression methods~\citep{crammer2001pranking, Shashua, crammer2005online} and likewise some of the first pairwise ranking methods~\citep{herbrich1999} featured the word ordinal regression in the title.

Despite its widespread applicability, there exists a relative paucity in the understanding of the theoretical properties behind ordinal regression methods, at least compared to that of binary and multiclass classification. One such example is the notion of \emph{Fisher consistency}, which relates the minimization of a given loss to the minimization of a surrogate with better computational properties. The importance of this property stems from the fact that many supervised learning methods, such as support vector machines, boosting and logistic regression for binary classification, can be seen as methods that minimize a convex surrogate on the 0-1 loss. Such results have emerged in recent years for classification~\citep{Bartlett2003,Zhang,Tewari2007}, ranking~\citep{Duchi2010, calauzenes2012non}, structured prediction~\citep{ciliberto2016consistent,osokin2017structured} and multiclass classification with an arbitrary loss function~\citep{Ramaswamy2012,ramaswamy2014convex}, a setting that subsumes ordinal regression. Despite these recent progress, the Fisher consistency of most surrogates used within the context of ordinal regression remains elusive.  The aim of this paper is to bridge the gap by providing an analysis of Fisher consistency for a wide family of ordinal regression methods that parallels the ones that already exist for other multiclass classification and ranking.

\hfill

{\bfseries Notation}.
Through the paper, we will use $k$ to denote the number of classes (i.e., labels) in the learning problem. We will denote by $\mathcal{S}$ the subset of $\RR^{k-1}$ for which the components are non-decreasing, that is,
$$
\mathcal{S} := \left\{\alpha:  \alpha \in \RR^{k-1} \text{ and } \alpha_i \leq \alpha_{i+1} \text{ for } 1 \leq i \leq k-2 \right\} \quad.
$$
$\Delta^p$ denotes the $p$-dimensional simplex, defined as
$$
\Delta^p := \left\{ x \in \RR^p : x_i \geq 0 \text{ and } \sum_{i=1}^p x_i = 1 \right\} \quad.
$$
Following~\citet{knuth1992two} we use the Iverson bracket $\llbracket \cdot \rrbracket$ as
\begin{equation*}
\llbracket q \rrbracket :=
\begin{cases}
1  \text{ if q is true }\\
0 \text{ otherwise }\quad.
\end{cases}
\end{equation*}
We will also make reference to loss functions commonly used in binary classification. These are the hinge loss ($\varphi(t) = \max(1 - t, 0)$), the squared hinge loss ($\varphi(t) = \max(1-t, 0)^2$), the logistic loss ($\varphi(t) = \log(1 + e^{-t})$), exponential loss ($\varphi(t) = e^{-t}$) and the squared loss ($\varphi(t) = (1 - t)^2$).

\subsection{Problem setting}
Here we present the formalism that we will be using throughout the paper. Let $(\XX, \mathcal{A})$ be a measurable space. Let $(X, Y)$ be two random variables with joint probability distribution $P$, where $X$ takes its values in $\XX$ and $Y$ is a random label taking values in a finite set of $k$ \emph{ordered categories} that we will denote $\mathcal{Y} = \{1, \ldots, k\}$. In the ordinal regression problem, we are given a set of $n$ observations $\{(X_1,
Y_1), \ldots, (X_n, Y_n) \}$ drawn i.i.d.~from $X\times Y$ and the goal is to learn from the observations a measurable mapping called a~\emph{decision function} $f: \XX \rightarrow \mathcal{S} \subseteq \RR^{k-1}$ so that the \emph{risk} given below is as small as possible:
\begin{equation}
  \label{eq:l_risk}
  \mathcal{L}(f) := \EE(\ell(Y, f(X)))\quad,
\end{equation}
where $\ell: \mathcal{Y} \times \mathcal{S}$ is a \emph{loss function} that measures the disagreement between the true label and the prediction. For ease of optimization, the decision function has its image in a subset of $\RR^{k-1}$, and
the function that converts an element of $\mathcal{S}$ into a class label is called a \emph{prediction function}. The prediction function that we will consider through the paper is given for $\alpha \in \mathcal{S}$ by the number of coordinates below zero plus one, that is,
\begin{equation}\label{eq:pred}
\text{pred}(\alpha) := 1 + \sum_{i=1}^{k-1}\llbracket \alpha_i < 0 \rrbracket \quad .
\end{equation}
Note that for the case of two classes $\mathcal{Y} = \{1, 2\}$, the decision function is real-valued and the prediction defaults the common binary classification rule in which prediction depends on the sign of this decision function.

Different loss functions can be used within the context of ordinal regression. The most commonly used one is the absolute error, which measures the absolute difference between the predicted and true labels. For $\alpha \in \mathcal{S}$, this is defined as
\begin{equation}\label{eq:absolute_loss}
\ell(y, \alpha) := \abs{y - \text{pred}(\alpha)} \quad.
\end{equation}
The absolute error loss is so ubiquitous in ordinal regression that some authors refer to it simply as \emph{the} ordinal regression loss~\citep{Agarwal2008,Ramaswamy2012}. For this reason we give special emphasis on this loss. However, we will also describe methods that minimize the 0-1 loss (i.e., the classification error) and in Section \ref{sct:extension_other_loss} we will see how some results can be generalized beyond these and to general loss functions that verify a certain admissibility criterion.

In order to find the decision function with minimal risk it might seem appropriate to minimize Eq.~\eqref{eq:l_risk}. However, this is not feasible in practice for two reasons. First, the probability distribution $P$ is unknown and the risk must be minimized approximately based on the observations. Second, $\ell$ is typically discontinuous in its second argument, hence the empirical approximation to the risk is difficult to optimize and can lead to an NP-hard problem~\citep{feldman2012agnostic,ben2003difficulty}\footnote{Note that binary classification can be seen as a particular case of ordinal regression.}. It is therefore common to approximate $\ell$ by a function $\psi: \mathcal{Y} \times \mathcal{S} \to \RR$, called a \emph{surrogate loss function}, which has better computational properties. The goal becomes then to find the decision function that instead minimizes the \emph{surrogate risk}, defined as
\begin{equation}\label{eq:psi_risk}
\mathcal{A}(f) := \EE(\psi(Y, f(X))) \enspace.
\end{equation}

We are interested by the statistical implications of such approximation. Assuming that we have full knowledge of the probability distribution that generates the data $P$, what are the consequences of optimizing a convex surrogate of the risk instead of the true risk?

The main property that we will study in order to answer this question is that of  \emph{Fisher consistency}.
Fisher consistency is a desirable property for surrogate loss functions~\citep{Lin2004} and implies that in the population setting, i.e., if the probability distribution $P$ were available, then optimization of the surrogate would yield a function with minimal risk. From a computational point of view, this implies that the minimization of the  surrogate risk, which is usually a convex optimization problem and hence easier to solve than the minimization of the risk, does not penalize the quality (always in the population setting) of the obtained solution.

We will use the following notation for the optimal risk and optimal surrogate risk:
$$
\mathcal{L}^* := \inf_f \mathcal{L}(f) \quad \text{ and } \quad \mathcal{A}^* := \inf_f \mathcal{A}(f) \quad,
$$
where the minimization is done over all measurable functions $\mathcal{X} \to \mathcal{S}$. $\mathcal{L}^*$ is sometimes referred to as the \emph{Bayes risk}, and a decision function (not necessarily unique) that minimizes the risk is called a \emph{Bayes decision function}.

\hfill

We will now give a precise definition of Fisher consistency. This notion originates from a classical parameter estimation setting. Suppose that an estimator $T$ of some parameter $\theta$ is defined
as a functional of the empirical distribution $P_n$. We denote it $T(P_n)$. The estimator is said to be Fisher consistent if its population analog, $T(P)$, coincides with the parameter $\theta$. Adapting this notion to the context of risk minimization (in which the optimal risk is the parameter to estimate) yields the following definition, adapted from~\citet{Lin2004} to an arbitrary loss function $\ell$:

\begin{definition}({\bfseries Fisher consistency}) Given a surrogate loss function $\psi: \mathcal{Y}\times \mathcal{S} \to \RR$, we will say that the surrogate loss function $\psi$ is consistent with respect to the loss $\ell: \mathcal{Y}\times \mathcal{S} \to \RR$ if for every probability distribution over $X \times Y$ it is verified that every minimizer $f$ of the surrogate risk reaches Bayes optimal risk, that is,
$$\mathcal{A}(f) = \mathcal{A}^* \implies \mathcal{L}(f) = \mathcal{L}^* \quad .$$
\end{definition}

For some surrogates we will be able to derive not only Fisher consistency, but also \emph{excess risk bounds}. These are bounds of the form
$$
\gamma(\mathcal{L}(f) - \mathcal{L}^*) \leq \mathcal{A}(f) - \mathcal{A}^* \quad,
$$
for some real-valued function $\gamma$ with $\gamma(0) = 0$. These inequalities not only imply Fisher consistency, but also allow to bound the excess risk by the excess in surrogate risk. These inequalities play an important role in different areas of learning theory, as they can be used for example to obtain rates of convergence~\citep{Bartlett2003} and oracle inequalities~\citep{boucheron2005theory}.

\subsection{Full and conditional risk}\label{scs:full_conditional_risk}

The above definition of Fisher consistency is often replaced by a point-wise version that is easier to verify in practice. Two key ingredients of this characterization are the notions of \emph{conditional risk} and \emph{surrogate conditional risk} that we will now define. These are denoted by $L$ and $A$ respectively, and defined for any $\alpha \in \mathcal{S}$, $p \in \Delta^k$ by
\begin{equation}\label{eq:conditional_risk_def}
L(\alpha, p) :=  \sum_{i=1}^k p_i \ell(i, \alpha) \quad \text{ and } \quad
A(\alpha, p) := \sum_{i=1}^k p_i \psi(i, \alpha) \quad.
\end{equation}
The full and conditional risk are then related by the equations
$$
\begin{aligned}
\mathcal{L}(f) &= \EE_{X \times Y}(\ell(Y, f(X))) = \EE_{X}\EE_{Y|X}(\ell(Y, f(X))) = \EE_X(L(f(X), \eta(X))) \\
\mathcal{A}(f) &= \EE_{X \times Y}(\psi(Y, f(X))) = \EE_{X}\EE_{Y|X}(\psi(Y, f(X))) = \EE_X(A(f(X), \eta(X))) \quad ,
\end{aligned}
$$
where $\eta: \mathcal{X} \to \Delta^k$ is the vector of conditional probabilities given by $\eta_i(x) = P(y=i|X=x)$. As for the full risk, we will denote by $L^*$, $A^*$ the infimum of its value for a given $p \in \Delta^k$, i.e.,
$$
L^*(p) = \inf_{\alpha \in \mathcal{S}}L(\alpha, p) \quad\text{ and }\quad A^*(p) = \inf_{\alpha \in \mathcal{S}}A(\alpha, p) \quad.
$$

When the risk infimum over functions that can be defined independently at every $x \in \mathcal{X}$, it is possible to relate the minimization of the risk with that of the conditional risk since
\begin{equation}
\begin{aligned}\label{eq:infrisk}
\inf_{f} \mathcal{L}(f) &= \inf_f \EE_{X \times Y}\left(\ell(Y, f(X) )\right)  =
\EE_{X}  \left[ \inf_{f} \EE_{Y|X}(\ell(Y, f(X))) \right] \\
&= \EE_{X}  \left[ \inf_{\alpha} L(\alpha, \eta(X)) \right] \quad.
\end{aligned}
\end{equation}

This equation implies that the minimal risk can be achieved by minimizing pointwise the conditional risk $L(\cdot)$, which--in general--will be easier that direct minimization of the full risk. The condition for this, i.e., that the functions be estimated independently at every sample point, is verified by the set of measurable functions from the sample space into a subset of $\RR^{k}$ (in this case $\mathcal{S}$), which is the typical setting in studies of Fisher consistency. However, this is no longer true when inter-observation constraints are enforced (e.g. smoothness). As is common in studies of Fisher consistency, we will suppose that the function class verifies the property of Eq.~\eqref{eq:infrisk} and we will discuss in Section~\ref{scs:parametric_consistency} an important family of functions in which this requisite is not met.

We will now present a characterization of Fisher consistency based on the pointwise risk which we will use throughout the paper.
Equivalent forms of this characterization have appeared under a variety of names in the literature, such as classification calibration~\citep{Bartlett2003,Ramaswamy2012}, infinite sample consistency~\citep{Zhang2004} and proper surrogates~\citep{buja2005loss, gneiting2007strictly}.

\begin{lemma}[Pointwise characterization of Fisher consistency]\label{lemma:characterization_Fisher}
  Let $A$ and $L$ be defined as in Eq~\eqref{eq:conditional_risk_def}. Then $\psi$ is Fisher consistent with respect to $\ell$ if and only if for all $p \in \Delta^k$ it is verified that
  \begin{equation}\label{eq:pointwise_condition}
  A(\alpha, p) = A^*(p) \implies L(\alpha, p) = L^*(p) \quad .
  \end{equation}
\end{lemma}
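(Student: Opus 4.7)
The plan is to exploit Eq.~\eqref{eq:infrisk} applied simultaneously to $\mathcal{L}$ and $\mathcal{A}$, which yields
$$
\mathcal{L}^* = \EE_X\!\left[L^*(\eta(X))\right] \quad \text{and} \quad \mathcal{A}^* = \EE_X\!\left[A^*(\eta(X))\right].
$$
Subtracting this identity from the full-risk formulas gives the two representations
$$
\mathcal{A}(f)-\mathcal{A}^* = \EE_X\!\left[A(f(X),\eta(X))-A^*(\eta(X))\right], \quad \mathcal{L}(f)-\mathcal{L}^* = \EE_X\!\left[L(f(X),\eta(X))-L^*(\eta(X))\right],
$$
whose integrands are pointwise non-negative by definition of $A^*$ and $L^*$. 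This means the global excess risk vanishes if and only if the pointwise excess vanishes $P_X$-almost surely, which is the bridge between the two notions.

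For the ($\Leftarrow$) direction, assume the pointwise implication~\eqref{eq:pointwise_condition} holds and fix $f$ with $\mathcal{A}(f)=\mathcal{A}^*$. By the representation above and the non-negativity of the integrand, $A(f(X),\eta(X)) = A^*(\eta(X))$ for $P_X$-a.e.~$x$. Applying~\eqref{eq:pointwise_condition} with $p=\eta(x)$ and $\alpha=f(x)$ yields $L(f(X),\eta(X)) = L^*(\eta(X))$ $P_X$-a.s., and integrating gives $\mathcal{L}(f) = \mathcal{L}^*$, so $\psi$ is Fisher consistent.

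For the ($\Rightarrow$) direction, I would argue by contrapositive. Suppose~\eqref{eq:pointwise_condition} fails: there exist $p_0\in\Delta^k$ and $\alpha_0\in\mathcal{S}$ with $A(\alpha_0,p_0) = A^*(p_0)$ but $L(\alpha_0,p_0)>L^*(p_0)$ (strict, since $L^*$ is an infimum). Fix any $x_0\in\mathcal{X}$ and take $P$ with marginal $P_X = \delta_{x_0}$ and conditional distribution $\eta(x_0)=p_0$. For the constant decision function $f\equiv\alpha_0$ one has $\mathcal{A}(f) = A(\alpha_0,p_0) = A^*(p_0) = \mathcal{A}^*$ while $\mathcal{L}(f) = L(\alpha_0,p_0) > L^*(p_0) = \mathcal{L}^*$, contradicting Fisher consistency. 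Hence consistency forces~\eqref{eq:pointwise_condition}.

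The only delicate step is the a.e.~reduction in the ($\Leftarrow$) direction: one must verify that $\alpha\mapsto A(\alpha,p)-A^*(p)\geq 0$ for every $p$, which is immediate from the definition of $A^*$, and that the pointwise infimum $\inf_\alpha A(\alpha,\eta(x))$ coincides with $\inf_f\EE_{Y|X}\psi(Y,f(X))$. This exchange of infimum and expectation is precisely the assumption underlying Eq.~\eqref{eq:infrisk}, which the excerpt has already imposed on the function class; no further hypotheses are needed.
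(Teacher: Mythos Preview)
Your proof is correct and follows essentially the same route as the paper: both directions hinge on writing the excess full risk as an expectation of a non-negative pointwise excess, then using a.e.\ vanishing for one implication and a ``constant-conditional'' distribution (you take $P_X=\delta_{x_0}$, the paper takes $\eta(x)\equiv\tilde{p}$ for all $x$) to build a counterexample for the other. Your treatment of the a.e.\ reduction is in fact slightly cleaner than the paper's, which somewhat loosely invokes the universal quantifier over $P$ to upgrade to pointwise equality.
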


\begin{proof}
Let $\mathcal{L}$ and $\mathcal{A}$ denote the expected value of $\ell$ and $\psi$, as defined in Equations ~\eqref{eq:l_risk} and~\eqref{eq:psi_risk} respectively.

$(\impliedby)$ We prove that Eq.~\eqref{eq:pointwise_condition} implies Fisher consistency. Let $f$ be such that $\mathcal{A}(f) = \mathcal{A}^*$. Then it is verified that
$$
\begin{aligned}
\mathcal{A}(f) - \mathcal{A}^* &= \EE_X(A(f(X), \eta(X)) - A^*(\eta(X))) = 0 \quad.
\end{aligned}
$$
The value inside the expectation is non-negative by definition of $A^*$. Since this is verified for all probability distributions over $X \times Y$, then it must be true that $A(f(x), \eta(x)) = A^*(\eta(x))$ for all $x \in \mathcal{X}$. By assumption $L(f(X), \eta(X)) = L^*(\eta(X))$. Hence the excess risk verifies
$$
\mathcal{L}(f) - \mathcal{L}^* = \EE_X(L(f(X), \eta(X)) - L^*(\eta(X))) = \EE(0) = 0 \quad.
$$
and so $\psi$ is Fisher consistent with respect to $\ell$.

$(\implies)$ We prove that Fisher consistency implies Eq.~\eqref{eq:pointwise_condition}. We do so by contradiction: first suppose that there exists a surrogate that is Fisher consistent but Eq.~\eqref{eq:pointwise_condition} is not verified and arrive to a contradiction. If Eq.~\eqref{eq:pointwise_condition} is not verified then there exists $\tilde{\alpha} \in \mathcal{S}$ and $\tilde{p} \in \Delta^{k}$ such that
$$
A(\tilde{\alpha}, \tilde{p}) = A^*(\tilde{p}) \text{ and } L(\tilde{\alpha}, \tilde{p}) > L^*(\tilde{p}) \quad.
$$
We now construct a probability distribution $(X, Y)$ such that the Fisher consistency characterization is not verified in order to arrive to a contradiction. For this, consider the probability distribution $P$ such that $\eta(x) = \tilde{p}$ for all $x \in\mathcal{X}$. Consider also $f: \mathcal{X} \to \mathcal{S}$, the mapping that is constantly $\tilde{\alpha}$. Then it is verified that
$$
\mathcal{A}(f) - \mathcal{A}^* = \EE_X(A(f(X), \eta(X)) - A^*(\eta(X))) = \EE_X(A(\tilde{\alpha}, \tilde{p}) - A^*(\tilde{p})) = 0 \quad,
$$
and so $\mathcal{A}(f) = \mathcal{A}^*$. Likewise, the excess risk verifies
$$\mathcal{L}(f) - \mathcal{L}^* = \EE_X(L(f(X), \eta(X)) - L^*(\eta(X))) = \EE_X(L(\tilde{\alpha}, \tilde{p}) - L^*(\tilde{p})) > 0$$
and so $\psi$ cannot be Fisher consistent with respect to $\ell$. This is a contradiction, and concludes the proof.

\end{proof}

\subsection{Summary of main results}

The main contribution of this paper is to characterize the Fisher consistency of a wide family of surrogate loss functions used for the task of ordinal regression. Contrary to known results for multiclass classification and ranking, where One-vs-All and RankSVM have been proven to be inconsistent, in the ordinal regression setting common surrogates such as ORSVM and proportional odds will be proven to be Fisher consistent. One of the most surprising results of this paper is that for a particular class of surrogates that verify a \emph{decomposability} property, it is possible to provide a characterization of Fisher consistency and excess risk bounds that generalize those known for convex margin-based surrogates (loss functions of the form $\varphi(Y f(X))$) in binary classification.

We will introduce the surrogate loss functions that we consider in Section~\ref{sec:problem_setting}. These will be divided between surrogates of the absolute error and surrogate of the 0-1 loss. We organize their study as follows:

\begin{itemize}
  \item  In Sections \ref{sct:absolute_error_surrogates} and~\ref{scs:squared_error} we characterize the {\bf
  Fisher consistency for surrogates of the absolute and squared error}. The surrogates that we consider in this section are the all threshold (AT), the cumulative link (CL), the least absolute deviation (LAD) and the least squares (LS). Besides Fisher consistency, a decomposability of the AT loss will allow us to provide excess risk bounds for this surrogate.

  \item In Section~\ref{scs:zero_one_surrogates} we characterize the {\bfseries Fisher consistency of the surrogates of the 0-1 loss}. For this loss, denoted immediate threshold (IT), its Fisher consistency will depend on the derivative at zero of a real-valued convex function.

  \item In Section~\ref{sct:extension_other_loss} we {\bfseries construct a surrogate for an arbitrary loss function} that verifies an admissibility condition. We name this surrogate generalized all threshold (GAT). This loss function generalizes the AT and IT loss functions introduced earlier. We will characterize the Fisher consistency of this surrogate.

  \item Turning back to one of the topics mentioned in the introduction, we discuss in Section~\ref{scs:parametric_consistency} the {\bfseries implications of inter-observational constraints in Fisher consistency}. Following~\citet{shi2015hybrid}, we define a restricted notion of consistency known as $\mathcal{F}$-consistency of parametric consistency and give sufficient conditions for the \mbox{$\mathcal{F}$-consistency} of two surrogates.

  \item In Section~\ref{scs:experiments} we {\bfseries examine the empirical performance of a novel surrogate}. This novel surrogate is a particular instance of the GAT loss function introduced in Section~\ref{sct:extension_other_loss} when considering the squared error as evaluation metric. We compare this novel surrogate against a least squares model on 9 different datasets, where the novel surrogate outperforms the least squares estimate on 7 out of the 9 datasets.

\end{itemize}

\subsection{Related work}

Fisher consistency of binary and multiclass classification for the zero-one loss has been studied for a variety of surrogate loss functions, see e.g.~\citep{Bartlett2003,Zhang,Tewari2007,Reid2010}. Some of the results in this paper generalize known results for binary classification to the ordinal regression setting. In particular,~\citet{Bartlett2003} provide a characterization of the Fisher consistency for convex margin-based surrogates that we extend to the all threshold (AT) and immediate threshold (IT) family of surrogate loss functions. The excess error bound that we provide for the AT surrogate also generalizes the excess error bound given in~\citep[Section 2.3]{Bartlett2003}.

Fisher consistency of arbitrary loss functions (a setting that subsumes ordinal regression) has been studied for some surrogates. \citet{lee2004multicategory} proposed a surrogate that can take into account generic loss functions and for which Fisher consistency was proven by~\citet{Zhang2004}. In a more general setting, \citet{Ramaswamy2012,ramaswamy2014convex} provide necessary and sufficient conditions for a surrogate to be Fisher consistent with respect to an arbitrary loss function. Among other results, they prove consistency of least absolute deviation (LAD) and an $\varepsilon$-insensitive loss with respect to the absolute error for the case of three classes ($k = 3$). In this paper, we extend the proof of consistency for LAD to an arbitrary number of classes. Unlike previous work, we consider the so-called \emph{threshold-based surrogates} (AT, IT and CL), which rank among the most popular ordinal regression loss functions and for which its Fisher consistency has not been studied previously.

Fisher consistency has also been studied in the pairwise ranking setting, where it has been proven~\citep{Duchi2010,calauzenes2012non} that some models (such as RankSVM) are not consistent. Despite similarities between ranking and ordinal regression, we will see in this paper that most popular ordinal regression models are Fisher consistent under mild conditions.

There are few studies on the theoretical properties of ordinal regression methods. A notable example comes from~\citet{Agarwal2008}, where the authors study generalization bounds for some ordinal regression algorithms. Some of the surrogate loss functions used by these models (such as the support vector ordinal regression of~\citet{Keerthi2003}) are analyzed in this paper. In that work, the authors outline the study of consistency properties of ordinal regression models as an important question to be addressed in the future.

A related, yet different, notion of consistency is \emph{asymptotic consistency}. A surrogate loss is said to be asymptotically consistent if the minimization of the $\psi$-risk converges to the optimal risk as the number of samples tends to infinity. It has also been studied in the setting of supervised learning~\citep{stone1977consistent,Steinwart2002}. This paper focuses solely on Fisher consistency, to whom we will refer simply as consistency from now on.

\section{Ordinal regression models}\label{sec:problem_setting}

We introduce the different ordinal regression models that we will consider within this paper. Considering first the absolute error, we will write this loss as a sum of binary 0-1 loss functions\footnote{The 0-1 loss, defined as the function that is $1$ for negative values and $0$ otherwise can be defined in bracket notation as $\ell_{0-1}(t) = \llbracket \alpha_i \leq 0 §\rrbracket$.}. This is a key reformulation of the absolute error that we will use throughout the paper. For any $y \in \mathcal{Y}$ and $\alpha \in \mathcal{S}$ we have the following sequence of equivalences
\begin{equation}\label{eq:development_absolute_error}
\begin{aligned}
\ell(y, \alpha) &= \abs{y - \text{pred}(\alpha)} = \abs{y - 1 - \sum_{i=1}^{k-1} \llbracket  \alpha_i < 0 §\rrbracket } \\
&= \abs{y - 1 - \sum_{i=1}^{y-1} \llbracket  \alpha_i < 0 §\rrbracket - \sum_{i=y}^{k-1}\llbracket  \alpha_i < 0 §\rrbracket} \\
&=\abs{\sum_{i=1}^{y-1}\llbracket  \alpha_i \geq 0 \rrbracket - \sum_{i=y}^{k-1}\llbracket  \alpha_i < 0 §\rrbracket} \quad .
\end{aligned}
\end{equation}

If $\alpha_y \geq 0$ then the second summand of the last equation equals zero. Otherwise, if $\alpha_y < 0$, then the first summand equals zero. In either case, we have
\begin{equation}\label{eq:absolute_value}
\ell(y, \alpha) = \sum_{i=1}^{y-1}\llbracket  \alpha_i \geq 0 \rrbracket + \sum_{i=y}^{k-1}\llbracket  \alpha_i < 0 §\rrbracket \quad .
\end{equation}

This expression suggests that a natural surrogate can be constructed by replacing the binary 0-1 loss in the above expression function by a convex surrogate such as the logistic or hinge loss. Denoting by $\varphi: \RR \to \RR$ such surrogate, we obtain the following loss function that we denote \emph{all threshold (AT)}:
\begin{equation}\label{eq:def_all_thresh}
\psi_\text{AT}(y, \alpha) := \sum_{i=1}^{y-1} \varphi(-\alpha_i) + \sum_{i=y}^{k-1} \varphi(\alpha_i) \quad.
\end{equation}
This function has appeared under different names in the literature. When $\varphi$ is the hinge loss, this model is known as support vector ordinal regression with implicit constraints~\citep{Keerthi2003} and support vector with sum-of-margins strategy~\citep{Shashua}. When $\varphi$ is the exponential loss, this model has been described in~\citep{lin2006large} as ordinal regression boosting with all margins. Finally,~\citet{Rennie} provided a unifying formulation for this approach considering for the hinge, logistic and exponential loss under the name of All-Threshold loss, a name that we will adopt in this paper.

The name \emph{thresholds} comes from the fact that in the aforementioned work, the decision function is of the form $\alpha_i = \theta_i - f(\cdot)$, where $(\theta_1, \ldots, \theta_{k-1})$ is a vector estimated from the data known as the vector of thresholds. We will discuss in Section~\ref{scs:parametric_consistency} the implications of such decision function. For the prediction rule to give meaningful results it is important to ensure that the thresholds are ordered, i.e., $\theta_1 \leq \theta_2, \leq \cdots, \leq \theta_{k-1}$~\citep{Keerthi2003}. In our setting, we enforce this through the constraint $\alpha \in \mathcal{S}$, hence the importance of restricting the problem to this subset of $\RR^{k-1}$.

Another family of surrogate loss functions takes a probabilistic approach and models instead the posterior probability. This is the case of the \emph{cumulative link} models of~\citet{McCullagh1980}. In such models the decision function $f$ is selected to approximate $\sigma(f_i(x)) = P(Y \leq i|X\!=\!x)$, where $\sigma: \RR \to [0, 1]$ is a function referred to as \emph{link function}. Several functions can be used as link function, although the most common ones are the sigmoid function and the Gaussian cumulative distribution. The sigmoid function, i.e., $\sigma(t) = 1/(1 + \exp(-t))$, leads to a model sometimes referred as {proportional odds}~\citep{McCullagh1980} and {cumulative logit}~\citep{agresti2010analysis}, although for naming consistency we will refer to it as \emph{logistic cumulative link}. Another important link function is given by the Gaussian cumulative distribution, $\sigma(t) = \frac{1}{\sqrt{2\pi}} \int_{-\infty}^t e^{-x^2/2}$, used in the Gaussian process ordinal regression model of~\citet{Chu2005a}. The cumulative link (CL) loss function is given by its negative likelihood, that is,
\begin{equation}\label{eq:propodds}
\psi_\text{CL}(y, \alpha) :=
\begin{cases}
-\log(\sigma(\alpha_1)) &\text{ if } y = 1 \\
-\log(\sigma(\alpha_y) - \sigma(\alpha_{y-1})) &\text{ if } 1 < y < k \\
-\log(1 - \sigma(\alpha_{k-1})) &\text{ if } y = k \quad.
\end{cases}
\end{equation}

We will now consider the multiclass 0-1 loss. In this case, the loss will be 1 if the prediction is below or above $y$ (i.e., if $\alpha_{y-1} \geq 0$ or $\alpha_y < 0$) and 0 otherwise. Hence, it is also possible to write the multiclass 0-1 loss as a sum of binary 0-1 loss functions:
\begin{equation*}
\ell(y, \alpha) = \begin{cases}
\llbracket \alpha_{1} < 0 \rrbracket &\text{ if } y = 1\\
\llbracket \alpha_{y-1} \geq 0 \rrbracket + \llbracket \alpha_{y} < 0 \rrbracket &\text{ if } 1 < y < k \\
\llbracket \alpha_{k-1} \geq 0 \rrbracket &\text{ if } y = k \quad.\\
\end{cases}
\end{equation*}
Given this expression, a natural surrogate is given by replacing the binary 0-1 loss by a convex surrogate as the hinge or logistic function. Following~\citet{Rennie}, we will refer to this loss function as \emph{immediate threshold (IT)}:
\begin{equation}\label{eq:immediate_threshold}
\psi_\text{IT}(y, \alpha) :=
\begin{cases}
\varphi(\alpha_{1}) &\text{ if } y = 1\\
\varphi(-\alpha_{y-1}) + \varphi(\alpha_y)&\text{ if } 1 < y < k \\
\varphi(-\alpha_{k-1}) &\text{ if } y = k \quad.
\end{cases}
\end{equation}
As with the AT surrogate, this loss has appeared under a variety of names in the literature. When $\varphi$ is the hinge loss, this model is known as support vector ordinal regression with explicit constraints~\citep{Keerthi2003} and support vector with fixed-margins strategy~\citep{Shashua}. When $\varphi$ is the exponential loss, this model has been described by~\citet{lin2006large} as ordinal regression boosting with left-right margins. We note that the construction of the AT and IT surrogates are similar, and in fact, we will see in Section~\ref{sct:extension_other_loss} that both can be seen as a particular instance of a more general family of loss functions.

\hfill

The aforementioned approaches can be seen as methods that adapt known binary classification methods to the ordinal regression setting. A different approach consists in treating the labels as real values and use regression algorithms to learn a real-valued mapping between the samples and the labels. This ignores the discrete nature of the labels, thus it is necessary to introduce a prediction function that converts this real value into a label in $\mathcal{Y}$. This prediction function is given by rounding to the closest label (see, e.g.,~\citep{kramer2001prediction} for a discussion of this method using regression trees). This approach is commonly referred to as the \emph{regression-based} approach to ordinal regression. If we are seeking to minimize the absolute error, a popular loss function is to minimize the least absolute deviation (LAD). For any $\beta \in \RR$, this is defined as
$$
\psi_\text{LAD}(y, \beta) := \abs{y - \beta} \quad,
$$
and prediction is then given by rounding $\beta$ to the closest label. This setting departs from the approaches introduced earlier by using a different prediction function. However, via a simple transformation it is possible to convert this prediction function (rounding to the closest label) to the prediction function that counts the number of non-zero components defined in Eq.~\eqref{eq:pred}. For a given $\beta \in \RR$, this transformation is given by
\begin{equation}\label{eq:lad_transform}
\alpha_1 = \frac{3}{2} - \beta, \quad \alpha_2 = \frac{5}{2} - \beta ,\quad \ldots ,\quad \alpha_{i} = i + \frac{1}{2} - \beta \quad.
\end{equation}
It is immediate to see that this vector $\alpha$ belongs to $\mathcal{S}$ and
$$
\begin{aligned}
\text{pred}(\alpha) &= 1 + \sum_{i=1}^{k-1} \llbracket i+\frac{1}{2} < \beta \rrbracket \\
&= \begin{cases}
1 \quad\text{ if } \beta \leq 1 + \frac{1}{2} \\
i \quad\text{ if } i - \frac{1}{2} \leq \beta < i + \frac{1}{2}, 1 < i < k \\
k \quad\text{ if } \beta \geq k - \frac{1}{2}
\end{cases} \\
&= \argmin_{1\leq i \leq k}\abs{\beta -i} \quad \text{(rounding to the lower label in case of ties)} \quad,
\end{aligned}
$$
hence predicting in the transformed vector $\alpha$ is equivalent to the closest label to $\beta$. We will adopt this transformation when considering LAD for convenience, in order to analyze it within the same framework as the rest. With the aforementioned transformation, the least absolute deviation surrogate is given by
\begin{equation}
\label{eq:LAD_definition}
\psi_\text{LAD}(y, \alpha) = \abs{y + \alpha_1 - \frac{3}{2}}
\end{equation}
Although the surrogate loss function LAD and the absolute loss of Eq.~\eqref{eq:absolute_loss} look very similar, they differ in that the LAD surrogate is convex on $\alpha$, while the absolute error is not, due to the presence of the discontinuous function $\text{pred}(\cdot)$.

\hfill

In this section we have introduced some of the most common ordinal regression methods  based on the optimization of a convex loss function. These are summarized in Table~\ref{sample-table}.

\begin{table}[ht]
\begin{mdframed}
\caption{Surrogate loss functions considered in this paper.} \label{sample-table}
\begin{center}
\begin{tabular}{p{4.0cm} c p{4.8cm}}
{\bfseries Model}  &{\bfseries Loss Function}  &{\bfseries Also known as} \\
\hline
All thresholds (AT) & $\sum_{i=1}^{y-1} \varphi(-\alpha_i) + \sum_{i=y}^{k-1}\varphi(\alpha_i)$ & Implicit constraints~\citep{Keerthi2003}, all margins~\citep{lin2006large}. \\
Cumulative link (CL) & $-\log(\sigma(\alpha_y) - \sigma(\alpha_{y-1}))$  & Proportional odds~\citep{McCullagh1980}, cumulative logit~\citep{agresti2010analysis}. \\
Immediate threshold (IT) & $\varphi(-\alpha_{y-1}) + \varphi(\alpha_{y})$ & Explicit constraints~\citep{Keerthi2003}, Fixed-margins~\citep{Shashua} \\
Least absolute deviation (LAD) & $| y + \alpha_1 - \frac{3}{2}|$ &  Least absolute error, least absolute residual, Sum of absolute deviations, $\ell_1$ regression. \\
Least squares (LS) & $\left( y + \alpha_1 - \frac{3}{2}\right)^2$ &  Squared error, sum of squares, $\ell_2$ regression. \\
\end{tabular}
\end{center}
\end{mdframed}
\end{table}

\section{Consistency results}

In this section we present consistency results for different surrogate loss functions. We have organized this section by the different loss functions against which we test for consistency. The first subsection presents results for the absolute error, which is the most popular loss for ordinal regression. In the second subsection we provide consistency results for a surrogate of the squared loss. Finally, in the third subsection we show results for the 0-1 loss as, perhaps surprisingly, several commonly used surrogates turn out to be consistent with respect to this loss.

\subsection{Absolute error surrogates}\label{sct:absolute_error_surrogates}

In this section we will assume that the loss function is the absolute error, i.e., $\ell(y, \alpha) = \abs{y - \text{pred}(\alpha)}$ and we will focus on surrogates of this loss.
For an arbitrary $\alpha \in \mathcal{S}$,  the conditional risk for the absolute error can be reformulated using the development of the absolute error from Eq.~\eqref{eq:development_absolute_error}:
\begin{equation*}
\begin{aligned}
L(\alpha, p) &=  \sum_{i=1}^k p_i \left(\sum_{j=1}^{i-1}\llbracket  \alpha_j \geq 0 \rrbracket + \sum_{j=i}^{k-1}\llbracket  \alpha_j < 0 §\rrbracket \right) \\
 &=  \sum_{i=1}^k \llbracket  \alpha_j \geq 0 \rrbracket (1 - u_i(p)) + \sum_{j=1}^{k}  \llbracket  \alpha_j < 0 \rrbracket u_i(p)  \quad,\\
\end{aligned}
\end{equation*}
where $u(p)$ is the vector of cumulative probabilities, i.e., $u_i(p) := \sum_{j=1}^i p_j$. Let $r = \text{pred}(\alpha)$. Then $\alpha_{r-1} < 0$ and $\alpha_r \geq 0$, from where the above can be simplified to
\begin{equation}\label{eq:conditional_risk}
\begin{aligned}
L(\alpha, p) &=  \sum_{i=1}^{r-1} u_i(p) + \sum_{i=r}^{k-1} (1 - u_i(p))\quad. \\
\end{aligned}
\end{equation}
Using this expression, we will now derive an explicit minimizer of the conditional risk. Note that because of the prediction function counts the number of nonzero coefficients,  only the sign of this vector is of true interest.
\begin{lemma}\label{lemma:bayes_decision_function} For any $p \in \Delta^k$, let $\underline{\alpha}(p)$ be defined as
$$\underline{\alpha}(p) = (2 u_1(p) - 1, \ldots, 2 u_{k-1}(p) - 1)\quad.$$
Then, $L(\cdot, p)$ achieves its minimum at $\underline{\alpha}(p)$, that is,
 $$\underline{\alpha}(p) \in \argmin L(\alpha, p)\quad.$$
\end{lemma}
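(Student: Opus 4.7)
The plan is to exploit the closed-form expression for the conditional risk given just before the lemma, namely
$L(\alpha,p)=\sum_{i=1}^{r-1}u_i(p)+\sum_{i=r}^{k-1}(1-u_i(p))$ with $r=\mathrm{pred}(\alpha)$, which shows that $L(\alpha,p)$ depends on $\alpha\in\mathcal{S}$ only through the integer $r\in\{1,\dots,k\}$. So the proof reduces to a one-dimensional minimization over $r$, followed by a verification that the candidate $\underline{\alpha}(p)$ (i) lies in $\mathcal{S}$, and (ii) produces the optimal $r$.

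First I would study the forward difference. Writing $g(r):=\sum_{i=1}^{r-1}u_i(p)+\sum_{i=r}^{k-1}(1-u_i(p))$, a direct computation gives
$g(r+1)-g(r)=u_r(p)-(1-u_r(p))=2u_r(p)-1$ for $1\le r\le k-1$.
Since $u_r(p)$ is non-decreasing in $r$ (cumulative sum of non-negative $p_j$), the sequence of forward differences is non-decreasing, so $g$ is discretely convex and attains its minimum at the smallest $r^\star$ such that $2u_{r^\star}(p)-1\ge 0$ (taking $r^\star=k$ if no such index exists in $\{1,\dots,k-1\}$). Equivalently, $r^\star=1+\#\{i:u_i(p)<1/2\}$.

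Next I would verify that $\underline{\alpha}(p)=(2u_1(p)-1,\dots,2u_{k-1}(p)-1)$ realizes this optimum. Monotonicity of $u_i(p)$ in $i$ gives $\underline{\alpha}_i(p)\le\underline{\alpha}_{i+1}(p)$, so $\underline{\alpha}(p)\in\mathcal{S}$. Moreover, by definition of $\mathrm{pred}$ in Eq.~\eqref{eq:pred},
$\mathrm{pred}(\underline{\alpha}(p))=1+\sum_{i=1}^{k-1}\llbracket 2u_i(p)-1<0\rrbracket=1+\#\{i:u_i(p)<1/2\}=r^\star$,
so $L(\underline{\alpha}(p),p)=g(r^\star)=\min_{r}g(r)=\inf_{\alpha\in\mathcal{S}}L(\alpha,p)$, which is the claim.

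There is essentially no hard step; the only mild subtleties are (a) making sure the reduction of the conditional risk to a function of $r=\mathrm{pred}(\alpha)$ uses the constraint $\alpha\in\mathcal{S}$ in the right way (the sign pattern of $\alpha$ must be ``negative then non-negative'', which is exactly what $\mathcal{S}$ enforces together with the definition of $\mathrm{pred}$), and (b) handling ties when some $u_i(p)=1/2$: at such indices the forward difference vanishes, so the minimum is attained on a non-singleton set, but $\underline{\alpha}(p)$ still lies in this set because the prediction rule in Eq.~\eqref{eq:pred} uses the strict inequality $\alpha_i<0$, consistent with $\underline{\alpha}_i(p)=0$ being treated as ``non-negative''. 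This gives the desired $\underline{\alpha}(p)\in\argmin_{\alpha\in\mathcal{S}}L(\alpha,p)$.
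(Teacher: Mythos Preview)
Your proof is correct and follows essentially the same reduction as the paper: both exploit Eq.~\eqref{eq:conditional_risk} to note that $L(\alpha,p)$ depends on $\alpha$ only through $r=\mathrm{pred}(\alpha)$, and then minimize over $r\in\{1,\dots,k\}$. The difference is in how that minimization is carried out. You argue via discrete convexity, computing the forward difference $g(r+1)-g(r)=2u_r(p)-1$ and using monotonicity of $u_r(p)$ to locate the minimizer $r^\star$. The paper instead fixes $r^*=\mathrm{pred}(\underline{\alpha}(p))$ and directly computes $L(\alpha,p)-L(\underline{\alpha}(p),p)$ in the three cases $r<r^*$, $r>r^*$, $r=r^*$, obtaining in each case a sum of terms $\pm(2u_i(p)-1)$ whose sign is determined by the position of $i$ relative to $r^*$.

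What the paper's route buys is an explicit formula for the excess conditional risk,
\[
L(\alpha,p)-L^*(p)=\sum_{i\in I}\abs{2u_i(p)-1},\qquad I=\{i:\alpha_i(2u_i(p)-1)<0\},
\]
which is stated as Eq.~\eqref{eq:excess_risk} and is used later in the proof of Theorem~\ref{thm:excess_risk} (excess risk bounds) and Theorem~\ref{thm:consistency_lad}. Your forward-difference argument is slightly cleaner for the bare statement of the lemma, but does not immediately yield this identity; if you wanted it, you would sum your forward differences between $r$ and $r^\star$, which recovers exactly the paper's case computation.
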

\begin{proof}
We will prove that for any $\alpha \in \mathcal{S}$ and any $p \in \Delta^k$, $L(\alpha, p) \geq L(\underline{\alpha}(p), p)$. We consider $p$ and $\alpha$ fixed and we denote $r^* = \text{pred}(\underline{\alpha}(p))$ and $r = \text{pred}(\alpha)$. We distinguish three cases, $r < r^*$, $r > r^*$ and $r = r^*$.

\begin{itemize}
\item {$r < r^*$}. In this case, Eq.~\eqref{eq:conditional_risk} implies that
\begin{equation*}
L(\alpha, p) - L(\underline{\alpha}(p), p) = - \sum_{i=r}^{r^*-1} u_i(p) +  \sum_{i=r}^{r^*-1} (1 - u_i(p)) = -\sum_{i=r}^{r^* - 1} 2 u_i(p) - 1 \quad.
\end{equation*}
Now, by the definition of prediction function, $2 u_i(p) - 1 < 0$ for $i < r^*$, so we have
$$
L(\alpha, p) - L(\underline{\alpha}(p), p) = \sum_{i=r}^{r^* - 1} \abs{2 u_i(p) - 1} \quad.
$$

\item $r > r^*$. Similarly, in this case Eq.~\eqref{eq:conditional_risk} implies that
\begin{equation*}
L(\alpha, p) - L(\underline{\alpha}(p), p) = \sum_{i=r^*}^{r-1} u_i(p) -  \sum_{i=r^*}^{r-1} (1 - u_i(p)) = \sum_{i=r^*}^{r - 1} 2 u_i(p) - 1\quad .
\end{equation*}
Since  by definition of prediction function $2 u_i(p) - 1 \geq 0$ for $i \geq r^*$, it is verified that
$$
L(\alpha, p) - L(\underline{\alpha}(p), p) = \sum_{i=r^*}^{r - 1} \abs{2 u_i(p) - 1}\quad .
$$
\item $r = r^*$. In this case, Eq.~\eqref{eq:conditional_risk} yields
$$
L(\alpha, p) - L(\underline{\alpha}(p), p) = 0 \quad.
$$
\end{itemize}

Let $I$ denote the set of indices for which $\alpha$ disagrees in sign with $\underline{\alpha}$, that is, $I = {\{ i: \alpha_i (2 u_i(p) - 1) < 0\}}$. Then, combining the three cases we have the following formula for the excess in conditional risk
\begin{equation}\label{eq:excess_risk}
  L(\alpha, p) - L(\underline{\alpha}(p), p) = \sum_{i \in I} \abs{2 u_i(p) - 1} \quad ,
\end{equation}
which is always non-negative and hence $L^*(p) = L(\underline{\alpha}(p), p)$.
\end{proof}

{\bfseries All threshold (AT)}. We will now consider the AT surrogate. We will prove that some properties known for binary classification are inherited by this loss function. More precisely, we will provide a characterization of consistency for convex $\varphi$ in Theorem~\ref{thm:all_threshold} and excess risk bounds in Theorem~\ref{thm:excess_risk} that parallel those of~\citet{Bartlett2003} for binary classification.

Through this section $A$ will represent the conditional risk of the AT surrogate, which can be expressed as:
\begin{equation}\label{eq:all_thresh_risk}
\begin{aligned}
A(\alpha, p) &= \sum_{j=1}^k p_j \psi_{\text{AT}}(j, \alpha) = \sum_{j=1}^k p_j \left( \sum_{i=1}^{j-1} \varphi(-\alpha_i) + \sum_{i=j}^{k-1} \varphi(\alpha_i) \right) \\
&= \sum_{i=1}^{k-1} (1 - u_i(p)) \varphi(-\alpha_i)  + u_i(p) \varphi(\alpha_i) \quad,
\end{aligned}
\end{equation}
where as in the previous section $u_i(p) = \sum_{j=1}^i p_i, \alpha \in \mathcal{S}$ and $p \in \Delta^k$. This surrogate verifies a decomposable property that will be key to further analysis. The property that we are referring to is that the above conditional risk it can be expressed as the sum of $k-1$ binary classification conditional risks. For $\beta \in \RR, q \in [0, 1]$, we define $C$ as follows
$$
C(\beta, q) = q \varphi(\beta) + (1 - q) \varphi(-\beta) \quad,
$$
where $C$ can be seen as the conditional risk associated with the binary classification loss function $\varphi$. Using this notation, the conditional risk $A$ can be expressed in terms of $C$ as:
$$
A(\alpha, p) = \sum_{i=1}^{k-1} C(\alpha_i, u_i(p)) \quad.
$$

Our aim is to compute $A^*$ in terms of the infimum of $C$, denoted $C^*(q) := \inf_{\beta} C(q, \beta)$. Since $C$ is the conditional risk of a binary classification problem, this would yield a link between the optimal risk for the AT surrogate and the optimal risk for a binary classification surrogate. However, this is in general not possible because of the monotonicity constraints in $\mathcal{S}$: the infimum over $\mathcal{S}$ need not equal the infimum over the superset $\RR^{k-1}$. We will now present a result that states sufficient conditions under which the infimum over $\mathcal{S}$ and over $\RR^{k-1}$ do coincide. This implies that $A^*$ can be estimated as the sum of $k-1$ different surrogate conditional risks, each one corresponding to a binary classification surrogate. A similar result was proven in the empirical approximation setting by \citet[Lemma 1]{Keerthi2003}. In this work, the authors consider the hinge loss and show that any minimizer of this loss automatically verifies the monotonicity constraints in $\mathcal{S}$.

In the following lemma we give sufficient conditions on $\varphi$ under which the monotonicity constraints can be ignored when computing $A^*$. This is an important step towards obtaining an explicit expression for $A^*$:
\begin{lemma}\label{lemma:factor_Ai} Let $\varphi: \RR \to \RR$ be a function such that
$
\varphi(\beta) - \varphi(-\beta)
$ is a non-increasing function of $\beta \in \RR$.
Then for all $p \in \Delta^k$, it is verified that
$$
A^*(p) = \sum_{i=1}^{k-1} C^*(u_i(p)) \quad.
$$
\end{lemma}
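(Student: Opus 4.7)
The conditional risk decomposes as $A(\alpha,p) = \sum_{i=1}^{k-1} C(\alpha_i, u_i(p))$, so the only obstruction to computing $A^*(p)$ coordinate-by-coordinate is the monotonicity constraint $\alpha \in \mathcal{S}$. My plan is to show this constraint is inactive under the hypothesis on $\varphi$: for any $\alpha \in \RR^{k-1}$, the non-decreasing rearrangement of its coordinates lies in $\mathcal{S}$ and has an $A$-value no larger than $\alpha$ itself. Coupled with the trivial direction, this gives the claimed identity.

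One inequality is immediate: since $\mathcal{S} \subset \RR^{k-1}$,
$$A^*(p) \;\geq\; \inf_{\alpha \in \RR^{k-1}} A(\alpha, p) \;=\; \sum_{i=1}^{k-1} C^*(u_i(p)),$$
where the last equality uses that the unconstrained problem separates across coordinates. The reverse inequality is where the hypothesis on $\varphi$ enters. I reduce it to a \emph{rearrangement lemma}: if $\alpha^{\uparrow}$ denotes the non-decreasing sort of $\alpha$, then $A(\alpha^{\uparrow}, p) \leq A(\alpha, p)$. Since $\alpha^{\uparrow} \in \mathcal{S}$, this yields $A^*(p) \leq \inf_{\alpha \in \RR^{k-1}} A(\alpha, p)$ and closes the argument.

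The rearrangement lemma is handled by bubble sort, which reduces the problem to a single adjacent swap. Let $\tilde{\alpha}$ be obtained from $\alpha$ by interchanging $\alpha_i$ and $\alpha_{i+1}$. Writing $h(\beta) := \varphi(\beta) - \varphi(-\beta)$, direct expansion and cancellation of the $k-3$ unchanged terms yield
$$A(\tilde{\alpha}, p) - A(\alpha, p) \;=\; \bigl(u_i(p) - u_{i+1}(p)\bigr)\bigl(h(\alpha_{i+1}) - h(\alpha_i)\bigr).$$
The first factor equals $-p_{i+1} \leq 0$. When $\alpha_i > \alpha_{i+1}$---precisely when the swap is corrective---the hypothesis that $h$ is decreasing gives $h(\alpha_{i+1}) - h(\alpha_i) \geq 0$, so the product is non-positive. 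Each corrective adjacent swap thus weakly decreases $A$, and iterating produces the sorted rearrangement $\alpha^{\uparrow}$ without ever increasing the objective.

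\textbf{Main obstacle.} The only computational step is the swap identity, which is routine but must be bookkept carefully; all dependence on the four terms at positions $i,i+1$ has to be regrouped so that the common prefactor $u_i(p)-u_{i+1}(p)$ and the "odd part" difference $h(\alpha_{i+1}) - h(\alpha_i)$ emerge cleanly. Conceptually, this is a rearrangement inequality: the monotonicity $u_1(p) \leq \cdots \leq u_{k-1}(p)$ and the decreasing hypothesis on $h$ are precisely what make aligning the $\alpha_i$'s with the $u_i(p)$'s in the same order optimal, rendering the constraint $\alpha \in \mathcal{S}$ redundant when taking the infimum of $A(\cdot, p)$.
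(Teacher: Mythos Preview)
Your proof is correct and follows the same overall strategy as the paper---show that the monotonicity constraint in $\mathcal{S}$ is inactive when taking the infimum of $A(\cdot,p)$---but the mechanism is somewhat different and, in one respect, cleaner.

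The paper's proof starts from an assumed unconstrained minimizer $\alpha^* \in \argmin_{\alpha\in\RR^{k-1}} A(\alpha,p)$ and, at a violating index $i$ (where $\alpha^*_{i+1}<\alpha^*_i$), \emph{replaces} $\alpha^*_{i+1}$ by $\alpha^*_i$. The replacement is cost-free because (i) optimality of $\alpha^*_i$ at coordinate $i$ gives $C(\alpha^*_i,u_i(p))\le C(\alpha^*_{i+1},u_i(p))$, and (ii) the decreasing property of $h(\beta)=\varphi(\beta)-\varphi(-\beta)$ together with $u_{i+1}(p)\ge u_i(p)$ forces equality at coordinate $i{+}1$. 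Iterating gives a vector in $\mathcal{S}$ with the same value.

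Your argument instead proves a \emph{rearrangement inequality} valid for every $\alpha\in\RR^{k-1}$: adjacent corrective swaps never increase $A(\cdot,p)$, so the sorted $\alpha^{\uparrow}\in\mathcal{S}$ satisfies $A(\alpha^{\uparrow},p)\le A(\alpha,p)$, and the reverse inequality follows by taking infima. The swap identity you compute is correct, and the sign analysis using $u_i(p)-u_{i+1}(p)=-p_{i+1}\le 0$ and $h$ decreasing is exactly right. Two small advantages of your route: it does not require the unconstrained infimum to be attained (the paper tacitly assumes $\argmin$ is nonempty), and it does not rely on coordinate-wise optimality---the inequality holds pointwise, which makes the passage to the infimum immediate.
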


\begin{proof} Let $p \in \Delta^k$ be fixed and let $\alpha^* \in \argmin_{\alpha \in \RR^{k-1}} A(\alpha, p)$. If $\alpha^* \in \mathcal{S}$, then the result is immediate since
$$
\sum_{i=1}^{k-1} C^*(u_i(p)) = A(\alpha^*, p) = \inf_{\alpha \in \mathcal{S}} A(\alpha, p) = A^*(p) \enspace .
$$
Suppose now $\alpha^* \notin \mathcal{S}$. We will prove that in this case it is possible to find another vector $\tilde{\alpha} \in \mathcal{S}$ sith the same surrogate risk. By assumption there exists a $i$ in the range $1 \leq i \leq k-2$ for which the monotonicity conditions in $\mathcal{S}$ are not verified. In this case it is verified that $\alpha_{i+1} < \alpha_i$. Since $(u_1(p),\ldots, u_{k-1}(p))$ is a non-decreasing sequence, for a fixed $p$ it is possible to write $u_{i+1}(p) = u_i(p) + \varepsilon$, with $\varepsilon \geq 0$. Then it is true that
$$
\begin{aligned}
C(\alpha^*_i, u_{i+1}(p)) &= (1 - u_i(p) - \varepsilon)\varphi(-\alpha^*_i) + (u_i(p) + \varepsilon) \varphi(\alpha^*_i) \\
&=C(\alpha^*_i, u_i(p)) + \varepsilon(\varphi(\alpha^*_i) - \varphi(-\alpha^*_i)) \quad.
\end{aligned}
$$
By assumption $\varepsilon(\varphi(\alpha^*_i) - \varphi(-\alpha^*_i))$ is a non-increasing function of $\alpha^*_i$ and so $\alpha^*_{i+1} < \alpha^*_i \implies C(\alpha_i, u_{i+1}(p)) \leq C(\alpha_{i+1}, u_{i+1}(p))$. By the optimality of $\alpha_{i+1}^*$, it must be $C(\alpha^*_i, u_{i+1}(p)) = C(\alpha^*_{i+1}, u_{i+1}(p))$. This implies that the vector in which $\alpha^*_{i+1}$ is replaced by $\alpha_i^*$ has the same conditional risk and hence suggest a procedure to construct a vector that satisfies the constraints in $\mathcal{S}$ and achieves the minimal risk in $\RR^{k-1}$. More formally, we define $\tilde{\alpha} \in \mathcal{S}$ as:$$
\tilde{\alpha}_i = \begin{cases}
\alpha^*_1 \text{ if } i=1 \\
\alpha^*_i \text{ if } \alpha^*_{i-1} \leq \alpha^*_{i} \\
\alpha^*_{i-1} \text{ if } \alpha^*_{i-1} > \alpha^*_{i} \quad.
\end{cases}
$$
Then by the above $C(\alpha^*_i, u_i(p)) = C(\tilde{\alpha}_i, u_i(p))$ for all $i$ and so
$
A(\alpha^*, p) = A(\tilde{\alpha}, p)
$. Now, since $\tilde{\alpha}$ is a non-decreasing vector by construction, $\tilde{\alpha} \in \mathcal{S}$ and we have the sequence of equalities
$$
A(\alpha^*, p) = A(\tilde{\alpha}, p) = \sum_{i=1}^{k-1} C^*(u_i(p)) \quad,
$$
which completes the proof.

\end{proof}

It is easy to verify that the condition on $\varphi$ of this theorem is satisfied by all the binary losses that we consider: hinge loss, the squared hinge loss, the logistic loss, exponential loss and the squared loss. With this result, if $\alpha^*_i$ is a minimizer of $C(u_i(p))$, then $(\alpha_1^*, \ldots, \alpha^*_{k-1})$ will be a minimizer of $A(p)$. Hence, the optimal decision function for the aforementioned values of $\varphi$ is simply the concatenation of known results for binary classification, which have been derived in~\citet{Bartlett2003} for the hinge, squared hinge and Exponential loss and in~\citep{Zhang} for the logistic loss. Using the results from binary classification we list the values of $\alpha^*$ and $A^*$ in the case of AT for different values of $\varphi$:
\begin{itemize}
  \item \emph{Hinge AT}, : $\alpha^*_i(p) = \text{sign}(2 u_i(p) - 1)$, \quad $A^*(p) = \sum_{i=1}^{k-1}\{ 1 - \abs{2 u_i(p) - 1}\}$.
  \item \emph{Squared hinge AT}, : $\alpha^*_i(p) = (2 u_i(p) - 1)$, \quad $A^*(p) = \sum_{i=1}^{k-1} 4  u_i(p) (1 - u_i(p))$.
  \item \emph{Logistic AT}: $\alpha^*_i(p) = \log\left(\frac{u_i(p)}{1 - u_i(p)}\right)$, $A^*(p)$ = ${\sum_{i=1}^{k-1} \{-E(u_i(p)) - E(1 - u_i(p))\}}$, where $E(t) = t \log(t)$.
  \item \emph{Exponential AT}: $\alpha^*_i(p) = {\frac{1}{2} \log\left(\frac{u_i(p)}{1 - u_i(p)}\right)}$, \quad $A^*(p) = {\sum_{i=1}^{k-1} 2 \sqrt{u_i(p) (1 - u_i(p))}}$.
  \item \emph{Squared AT}, $\alpha_i^*(p) = 2 u_i(p) - 1$, \quad $A^*(p) = \sum_{i=1}^{k-1} (2 - 2 u_i(p))^2$.
\end{itemize}

It is immediate to check that the models mentioned above are consistent since the decision functions coincides in sign with the minimizer of the risk defined in Lemma~\ref{lemma:bayes_decision_function}. Note that the sign of $\alpha^*_i(p)$ at $u_i(p) = \frac{1}{2}$ is irrelevant, since by Eq.~\eqref{eq:conditional_risk} both signs have equal risk. We now provide a result that characterizes consistency for a convex $\varphi$:

\begin{theorem}\label{thm:all_threshold}
  Let $\varphi: \RR \to \RR$ be convex. Then the AT surrogate is consistent if and only if $\varphi$ is differentiable at $0$ and $\varphi'(0) < 0$.
\end{theorem}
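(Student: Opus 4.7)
The plan is to apply the pointwise characterization of Lemma~\ref{lemma:characterization_Fisher} combined with the decomposition $A(\alpha, p) = \sum_{i=1}^{k-1} C(\alpha_i, u_i(p))$ from Eq.~\eqref{eq:all_thresh_risk} and the excess-risk formula $L(\alpha, p) - L^*(p) = \sum_{i \in I} |2u_i(p) - 1|$ from Eq.~\eqref{eq:excess_risk}. Together these reduce Fisher consistency of $\psi_{\text{AT}}$ to the claim that, for every $p \in \Delta^k$, every minimizer $\alpha \in \mathcal{S}$ of $A(\cdot, p)$ satisfies $\sign(\alpha_i) = \sign(2u_i(p) - 1)$ whenever $u_i(p) \neq 1/2$. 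Because $u_1(p) \leq \cdots \leq u_{k-1}(p)$, this desired sign pattern is always compatible with the constraint $\alpha \in \mathcal{S}$, so the core question becomes whether each binary sub-risk $C(\cdot, u_i(p))$ is classification calibrated in the sense of~\citet{Bartlett2003}.

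For the sufficiency direction, suppose $\varphi$ is convex, differentiable at $0$, and $\varphi'(0) < 0$. Both one-sided derivatives of $C(\cdot, q)$ at $\beta = 0$ equal $(2q - 1)\varphi'(0)$, which is strictly positive for $q < 1/2$ and strictly negative for $q > 1/2$; convexity of $C(\cdot, q)$ then forces every unconstrained minimizer to lie strictly on the correct side of $0$. To transfer this to the constrained minimization over $\mathcal{S}$, I would argue by contradiction via an exchange step: if $\alpha$ minimizes $A(\cdot, p)$ but $\alpha_j \geq 0$ while $u_j(p) < 1/2$, pick $j$ maximal with this property and let $I = \{i \leq j : \alpha_i = \alpha_j\}$. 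Decreasing the entire block $\{\alpha_i : i \in I\}$ by a sufficiently small $\delta > 0$ preserves membership in $\mathcal{S}$ (the preceding coordinate is strictly smaller, and maximality of $j$ together with monotonicity of $\alpha$ forces $\alpha_{j+1} \geq 0$), while strictly decreasing each $C(\cdot, u_i(p))$ for $i \in I$ (since $u_i(p) \leq u_j(p) < 1/2$ and $\alpha_i \geq 0$ places $\alpha_i$ on the strictly increasing side of the convex function $C(\cdot, u_i(p))$), contradicting the minimality of $\alpha$. A symmetric argument handles the case $\alpha_j < 0$ with $u_j(p) > 1/2$.

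For the necessity direction, I would argue the contrapositive: if $\varphi$ is not differentiable at $0$ or $\varphi'(0) \geq 0$, then by the characterization of binary calibration in~\citet{Bartlett2003} the surrogate $C(\cdot, q)$ fails classification calibration at some $q \in (0, 1) \setminus \{1/2\}$, i.e., some minimizer $\beta^*$ of $C(\cdot, q)$ satisfies $\sign(\beta^*) \neq \sign(2q - 1)$ (possibly $\beta^* = 0$, as happens when $\varphi$ has a kink at $0$ with $\varphi'(0^-) \leq 0 \leq \varphi'(0^+)$). To lift this to AT inconsistency, I would choose $p = (p_1, p_2, 0, \ldots, 0) \in \Delta^k$ so that $u_1(p) = q$ and verify by direct computation that the constrained minimizer of $A(\cdot, p)$ over $\mathcal{S}$ places $\alpha_1 = \beta^*$, yielding $L(\alpha, p) > L^*(p)$. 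The main obstacle throughout is the monotonicity constraint in $\mathcal{S}$: in the sufficiency proof it prevents a clean coordinate-wise reduction à la Lemma~\ref{lemma:factor_Ai} and forces the block-exchange bookkeeping described above, while in the necessity proof it requires choosing $p$ so that the bad binary minimizer survives the joint constrained minimization rather than being smoothed out by the other coordinates.
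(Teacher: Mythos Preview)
Your approach differs from the paper's: rather than deferring to the general GAT result (Theorem~\ref{thm:main_thm}), which is proved via a KKT/subgradient argument on the constrained problem, you work directly with the binary decomposition $A(\alpha,p)=\sum_i C(\alpha_i,u_i(p))$ and a block-exchange perturbation. For the sufficiency direction this is a clean and correct alternative; the block-move preserves membership in $\mathcal{S}$ (the only thing you need at the right end is $\alpha_{j+1}\ge\alpha_j$, which holds by monotonicity---the remark ``forces $\alpha_{j+1}\ge 0$'' is slightly beside the point but harmless), and strict decrease of each $C(\cdot,u_i(p))$ follows from $C'(0,u_i(p))=(2u_i(p)-1)\varphi'(0)>0$ together with convexity.

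The necessity direction, however, has a genuine gap. Your plan is to embed a ``bad'' binary minimizer $\beta^*$ via $p=(q,1-q,0,\ldots,0)$ and then assert that the constrained minimizer of $A(\cdot,p)$ over $\mathcal{S}$ has $\alpha_1=\beta^*$. This fails precisely in the case $\varphi'(0)>0$: then $\varphi$ is minimized at some $m<0$, while for $q<\tfrac12$ the binary minimizer satisfies $\beta^*>0$ (since $C'(0,q)=(2q-1)\varphi'(0)<0$), so the coordinate-wise optimum $(\beta^*,m,\ldots,m)$ violates $\alpha_1\le\alpha_2$. Concretely, take $\varphi(t)=(t+1)^2$, $k=3$, $p=(q,1-q,0)$ with $q\in(0,\tfrac12)$: the constrained minimizer is $\alpha_1=\alpha_2=-q$, not $\alpha_1=\beta^*=1-2q$. (Inconsistency still holds here, but because $\alpha_2$ has the wrong sign, not $\alpha_1$.) The clean fix is to argue by contradiction exactly as the paper does in the proof of Theorem~\ref{thm:main_thm}: assume consistency, deduce the sign that $\alpha_1^*$ (or $\alpha_{k-1}^*$) must carry, and then show that replacing that coordinate by $0$---which stays in $\mathcal{S}$ thanks to the deduced sign pattern---strictly lowers $A$ whenever $\varphi'(0^-)<\varphi'(0^+)$ or $\varphi'(0)\ge 0$.
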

\begin{proof} We postpone the proof until Section~\ref{sct:extension_other_loss}, where this will follow as a particular case of Theorem~\ref{thm:main_thm}.
\end{proof}

We will now derive excess risk bounds for AT. These are inequalities that relate the excess conditional risk $L(\alpha) - L^*$, to the excess in surrogate conditional risk $A(\alpha) - A^*$. For this, we will make use of the $\gamma$-transform\footnote{\citet{Bartlett2003} define this as the $\psi$-transform. However, since we already use $\psi$ to denote the surrogate loss functions we will use letter $\gamma$ in this case.} of a binary loss function~\citep{Bartlett2003}. For a convex function $\varphi$ this is defined as
\begin{equation}\label{eq:psi_transform}
\gamma(\theta) = \varphi(0) - C^*\left(\frac{1 + \theta}{2}\right) \quad.
\end{equation}
We will now state the excess risk bound of the AT surrogate in terms of the $\gamma$-transform:

\begin{theorem}[Excess risk bounds]\label{thm:excess_risk}
Let $\varphi: \RR \to \RR$ be a function that verifies the following conditions:
\begin{itemize}
\item $\varphi$ is convex.
\item $\varphi$ is differentiable at $0$ and $\varphi'(0) < 0$.
\item $\varphi(\beta) - \varphi(-\beta)$ is a non-increasing function of $\beta$.
\end{itemize}
Then for any $\alpha \in \mathcal{S}, p \in \Delta^k$, the following excess risk bound is verified:
\begin{equation}\label{eq:excess_risk_bounds}
\gamma\left(\frac{L(\alpha, p) - L^*(p)}{k-1}\right) \leq \frac{A(\alpha, p) - A^*(p)}{k-1} \quad.
\end{equation}
\end{theorem}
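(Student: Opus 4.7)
The plan is to leverage the additive decomposition of the AT surrogate into $k-1$ binary classification problems, so that a pointwise binary calibration bound of Bartlett--Jordan--McAuliffe type can be applied coordinatewise and then aggregated through Jensen's inequality for the convex function $\gamma$.

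First I would combine Lemma~\ref{lemma:factor_Ai} (whose hypothesis is exactly the third assumption on $\varphi$) with Eq.~\eqref{eq:all_thresh_risk} to write the excess surrogate risk as a sum of $k-1$ binary excess risks, $A(\alpha,p) - A^*(p) = \sum_{i=1}^{k-1}\bigl[C(\alpha_i, u_i(p)) - C^*(u_i(p))\bigr]$. In parallel, Eq.~\eqref{eq:excess_risk} from the proof of Lemma~\ref{lemma:bayes_decision_function} already gives $L(\alpha,p) - L^*(p) = \sum_{i \in I}\abs{2u_i(p)-1}$, where $I = \{i : \alpha_i(2u_i(p) - 1) < 0\}$ indexes the coordinates where $\alpha_i$ has the wrong sign relative to the Bayes choice $2u_i(p) - 1$. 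The theorem therefore reduces to a pointwise comparison between $\gamma\bigl(\abs{2q-1}\,\llbracket \beta(2q-1)<0\rrbracket\bigr)$ and $C(\beta,q) - C^*(q)$, applied $k-1$ times and averaged.

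The key intermediate step is the binary inequality $\gamma\bigl(\abs{2q-1}\,\llbracket \beta(2q-1)<0\rrbracket\bigr) \leq C(\beta,q) - C^*(q)$, for every $\beta \in \mathbb{R}$ and $q \in [0,1]$. When $\beta(2q-1) \geq 0$, the left side equals $\gamma(0) = \varphi(0) - C^*(1/2) = 0$, using that convexity of $\varphi$ forces $C^*(1/2) = \varphi(0)$ (attained at $\beta = 0$ by Jensen). When $\beta(2q-1) < 0$, convexity of $C(\cdot, q)$ together with $\partial_\beta C(0,q) = (2q-1)\varphi'(0)$ shows that $C(\cdot, q)$ is monotone on the wrong-sign halfline; since $\varphi'(0) < 0$, its minimum on that halfline is attained at $\beta = 0$, giving $C(\beta,q) \geq \varphi(0) = \gamma(\abs{2q-1}) + C^*(q)$. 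Applying this inequality with $(\beta,q) = (\alpha_i, u_i(p))$ for every $i$, averaging over $i$, and invoking Jensen's inequality for $\gamma$ (which is convex because $C^*$ is the pointwise infimum of functions affine in $q$, hence concave) yields $\gamma\bigl(\tfrac{1}{k-1}\sum_{i \in I}\abs{2u_i(p)-1}\bigr) \leq \tfrac{1}{k-1}\sum_{i=1}^{k-1}\gamma\bigl(\abs{2u_i(p)-1}\,\llbracket i \in I\rrbracket\bigr) \leq \tfrac{A(\alpha,p) - A^*(p)}{k-1}$, and recognising the argument of the outer $\gamma$ as $(L(\alpha,p) - L^*(p))/(k-1)$ closes the argument.

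The main obstacle is proving the pointwise binary bound cleanly using all three hypotheses in concert: convexity of $\varphi$ both makes $C^*$ concave (so $\gamma$ is convex and Jensen applies) and $C(\cdot,q)$ convex in $\beta$; the decreasing hypothesis on $\varphi(\beta)-\varphi(-\beta)$ powers Lemma~\ref{lemma:factor_Ai} and lets us drop the monotonicity constraint when evaluating $A^*$; and $\varphi'(0) < 0$ is precisely what forces the restricted infimum of $C(\cdot,q)$ over the wrong-sign halfline to be attained at $\beta = 0$ and equal $\varphi(0)$. Once this binary ingredient is established, the ordinal bound follows automatically by averaging $k-1$ independent copies, which is exactly what the normalisation by $k-1$ on both sides of the target inequality is designed to accommodate.
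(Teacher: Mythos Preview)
Your proposal is correct and follows essentially the same route as the paper: decompose both excess risks into $k-1$ binary pieces via Lemma~\ref{lemma:factor_Ai} and Eq.~\eqref{eq:excess_risk}, establish the pointwise binary bound $C(\beta,q)\geq\varphi(0)$ on the wrong-sign side, then aggregate with Jensen using the convexity of $\gamma$. The only cosmetic difference is that the paper obtains $C(\beta,q)\geq\varphi(0)$ by applying Jensen to $\varphi$ directly (yielding $C(\beta,q)\geq\varphi(\beta(2q-1))\geq\varphi(0)+\beta(2q-1)\varphi'(0)\geq\varphi(0)$), whereas you argue via the sign of $\partial_\beta C(0,q)$ and convexity of $C(\cdot,q)$; both arguments are equivalent.
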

\begin{proof}
  Let ${I}$ denote the set of indices in which the sign of $\alpha$ does not coincide with $\underline{\alpha}$, that is, $I = \{ i: \alpha_i(2 u_i(p) - 1) < 0 \}$. From~\citet[Lemma 7]{Bartlett2003}, we know that if $\varphi$ is convex and consistent (in the context of binary classification), then $\gamma$ is convex and we can write
  \begin{equation}
  \begin{aligned}\label{eq:risk2}
    \gamma\left(\frac{L(\alpha, p) - L^*(p)}{k-1}\right)& = \gamma\left(\frac{\sum_{i \in I} |2 u_i(p) - 1|}{k-1}\right) &\text{ (by Eq.~\eqref{eq:excess_risk})}\\
    &\leq \frac{\sum_{i \in I} \gamma(|2 u_i(p) - 1|)}{k-1} &\text{ (by Jensen's inequality)} \\
    &= \sum_{i \in I} \frac{\gamma(2 u_i(p) - 1)}{k-1} &\text{ (by symmetry of $\gamma$)} \\
    &= \sum_{i \in I} \frac{\varphi(0) -  C^*\left(u_i(p)\right)}{k-1}&\text{ (by definition of $\gamma$)}.
  \end{aligned}
  \end{equation}
Let $q \in [0, 1], \beta \in \RR$. If we can further show that $\beta ( 2 q - 1) \leq 0$ implies $\varphi(0) \leq C(\beta, q)$, then
$$
\begin{aligned}
\sum_{i \in I}\left(\varphi(0) - C^*(u_i(p))\right) &\leq \sum_{i \in I}\left(C(\alpha_i, u_i(p)) - C^*(u_i(p))\right) \\
&\leq  \sum_{i=1}^{k-1} \left(C(\alpha_i, u_i(p)) - C^*(u_i(p))\right) \\
&= A(\alpha, p) - A^*(p) \quad \text{(by Lemma~\ref{lemma:factor_Ai})}.
\end{aligned}
$$
Combining this inequality with Eq.~\eqref{eq:risk2}, we obtain the theorem. Therefore we only need to prove that $\beta ( 2 q - 1) \leq 0$ implies $\varphi(0) \leq C(\beta, q)$. Suppose $\beta ( 2 q - 1) \leq 0$. Then by Jensen's inequality
$$
C(\beta, q) = q \varphi(\beta) + (1 - q) \varphi(-\beta) \geq \varphi(q \beta - (1 - q) \beta) = \varphi(\beta(2 q - 1)) \quad.
$$
Now, by convexity of $\varphi$ we have
$$
\varphi(\beta(2 q - 1)) \geq \varphi(0) + \beta(2 q - 1) \varphi'(0)\geq\varphi(0) \quad,
$$
where the last inequality follows from the fact that $\varphi'(0) < 0$ and $\beta(2 q - 1) \leq 0$. This concludes the proof.
\end{proof}

Note that we have given the excess risk bounds in terms of the conditional risk. These can also be expressed in terms of the full risk, as done for example by~\citet{Bartlett2003,Zhang}. Within the conditions of the theorem, $\gamma$ is convex and because of Jensen inequality, it is verified that
$$
\gamma\left(\EE_X\left[L(f(X), \eta(X)) - L^*(\eta(X))\right]\right) \leq \EE_X\left[\gamma(L(f(X), \eta(X))\right] \quad.
$$
This, together with Eq.~\eqref{eq:excess_risk_bounds} yields the following bound in terms of the full risk
$$
\begin{aligned}
\gamma\left(\frac{\mathcal{L}(f) - \mathcal{L}}{k-1}\right) &\leq \EE_X \left[ \gamma\left(\frac{L(f(X), \eta(X)) - L^*(\eta(X))}{k-1}\right)\right] \\
&\leq \EE_X \left[ \frac{A(f(X), \eta(X)) - A^*(\eta(X))}{k-1}\right] \\
&= \frac{\mathcal{A}(f) - \mathcal{A}^*}{k-1}
\end{aligned}
$$

{\bfseries Examples of excess risk bounds}. We will now derive excess bounds for different instances of the AT loss function. The values of $\gamma$ only depend on $\varphi$, so we refer the reader to~\citet{Bartlett2003} on the estimation of $\gamma$ for the hinge, squared hinge and Exponential loss and to~\citep{Zhang} for the logistic loss. Here, we will merely apply the known form of $\gamma$ to the aforementioned surrogates.

\begin{itemize}
  \item \emph{Hinge AT}, : $\gamma(\theta) = \abs{\theta} \implies L(\alpha, p) - L^*(p) \leq A(\alpha, p) - A^*$.
  \item \emph{Squared hinge AT}, : $\gamma(\theta) = \theta^2 \implies \left(\frac{L(\alpha, p) - L^*(p)}{k-1}\right)^2 \leq A(\alpha, p) - A^*$.
  \item \emph{Logistic AT}: $\gamma(\theta) = \frac{\theta^2}{2} \implies \left(\frac{L(\alpha, p) - L^*(p)}{\sqrt{2} (k-1)}\right)^2 \leq A(\alpha, p) - A^*$.
  \item \emph{Exponential AT}: $\gamma(\theta) = 1 - \sqrt{1 - \theta^2} \implies$ \\ $(k-1)(1 - \sqrt{1 - \frac{(L(\alpha, p) - L^*(p))^2}{k-1}}) \leq A(\alpha, p) - A^*$.
  \item \emph{Squared AT}: $\gamma(\theta) = \theta^2 \implies \left(\frac{L(\alpha, p) - L^*(p)}{k-1}\right)^2 \leq A(\alpha, p) - A^*$ .
\end{itemize}

For $k=2$, these results generalize the known excess risk bounds for binary surrogates. For $k>2$, the normalizing factor $\frac{1}{k-1}$ is not surprising, since the absolute error is bounded by $k-1$ while the 0-1 loss is bounded by $1$. While similar excess risk bounds are known for multiclass classification~\citep{Zhang2004,avila2013cost}, to the best of our knowledge this is the first time that such bounds have been developed for the AT surrogate ($k > 2$).

{\bfseries Cumulative link (CL)}. We now focus on the CL loss function defined in Eq.~\eqref{eq:propodds}, which we restate here for convenience:
\begin{equation}\label{eq:propodds}
\psi_\text{CL}(y, \alpha) :=
\begin{cases}
-\log(\sigma(\alpha_1)) &\text{ if } y = 1 \\
-\log(\sigma(\alpha_y) - \sigma(\alpha_{y-1})) &\text{ if } 1 < y < k \\
-\log(1 - \sigma(\alpha_{k-1})) &\text{ if } y = k \quad.
\end{cases}
\end{equation}
The terms of this surrogate can be seen as the negative log-likelihood of a probabilistic model in which $\sigma(\alpha)$ model the cumulative probabilities:
\begin{equation}\label{eq:cl_optimal}
\sigma(\alpha_i) = P(Y \leq i | X = x) = u_i(p)\quad,
\end{equation}
and so the likelihood is maximized for $\sigma(\alpha_i^*) = u_i(p)$. Assuming that the inverse of the link function $\sigma$ exists, this implies that the minimizer of the surrogate loss function (given by the negative log-likelihood) is given by $\alpha^*_i(p) = \sigma^{-1}(u_i(p))$. Plugging into the formula for the surrogate risk yields $A^*(p) = \sum_{i=1}^k p_i \log(p_i)$.
This immediately leads to a characterization of consistency based on the link function $\sigma$:

\begin{theorem}\label{thm:consistency_cl}
Suppose $\sigma$ is an invertible function. Then the CL surrogate is consistent if and only if the inverse link function verifies
\begin{equation}\label{eq:condition_cl}
\begin{aligned}
  \left(\sigma^{-1}(t)\right)(2 t - 1) > 0 &\text{ for } t \neq \frac{1}{2} \\
  \end{aligned}
  \quad.
\end{equation}
\end{theorem}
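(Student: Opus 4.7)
The plan is to invoke the pointwise characterization of Fisher consistency (Lemma~\ref{lemma:characterization_Fisher}) together with the explicit form of the CL minimizer in Eq.~\eqref{eq:cl_optimal} and the excess-risk formula for the absolute error derived in Eq.~\eqref{eq:excess_risk}. In effect, I would show that both directions of the equivalence reduce to the single requirement that, at the minimizer $\alpha^*_i(p)=\sigma^{-1}(u_i(p))$ of the CL surrogate conditional risk, the sign of $\alpha^*_i(p)$ agrees with the sign of $2u_i(p)-1$ whenever the latter is nonzero, i.e., exactly Eq.~\eqref{eq:condition_cl}.

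First I would verify that the minimizer described in Eq.~\eqref{eq:cl_optimal} is essentially unique. Writing $q_y:=\sigma(\alpha_y)-\sigma(\alpha_{y-1})$ with the conventions $\sigma(\alpha_0)=0$ and $\sigma(\alpha_k)=1$, the map $\alpha\mapsto q$ is a bijection from $\mathcal{S}$ onto $\Delta^k$ when $\sigma$ is invertible and monotone, and the CL conditional risk becomes $A(\alpha,p)=-\sum_{y=1}^k p_y\log q_y$. This is the usual cross-entropy, whose unique minimizer on $\Delta^k$ (for $p$ in the relative interior of $\Delta^k$) is $q=p$. Inverting the substitution and using invertibility of $\sigma$ gives $\alpha^*_i(p)=\sigma^{-1}(u_i(p))$ as the unique minimizer of $A(\cdot,p)$, together with $A^*(p)=\sum_i p_i\log p_i$.

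Next I would couple this with Lemma~\ref{lemma:bayes_decision_function} and the excess-risk identity \eqref{eq:excess_risk}: for any $\alpha\in\mathcal{S}$,
\[
L(\alpha,p)-L^*(p)\;=\;\sum_{i\in I}|2u_i(p)-1|,\qquad I=\{i:\alpha_i(2u_i(p)-1)<0\}.
\]
Hence $L(\alpha,p)=L^*(p)$ if and only if $\alpha_i$ has the same sign as $2u_i(p)-1$ for every $i$ with $u_i(p)\neq 1/2$. Specializing to the unique minimizer $\alpha=\alpha^*(p)$, the pointwise condition of Lemma~\ref{lemma:characterization_Fisher} becomes: for all $p$ and all $i$ with $u_i(p)\neq 1/2$,
\[
\sigma^{-1}(u_i(p))\,(2u_i(p)-1)>0.
\]
Since the cumulative probabilities $u_i(p)$ sweep out all of $(0,1)$ as $p$ varies over $\Delta^k$ (e.g.\ by taking $p$ with $p_1,\ldots,p_i$ arbitrary subject to $u_i(p)=t$ and spreading the remaining mass on $p_{i+1},\ldots,p_k$), this is exactly the global condition \eqref{eq:condition_cl}, giving both the ``if'' and ``only if'' directions.

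The main obstacle is handling the boundary cases in $\Delta^k$: when some $p_y=0$ the cross-entropy term $-p_y\log q_y$ vanishes, so the minimizer of $A(\cdot,p)$ is no longer literally unique, and one has to verify that \emph{every} minimizer still produces the Bayes-optimal prediction (by showing that the indeterminate coordinates correspond to indices where $u_i(p)$ equals $u_{i-1}(p)$ or $u_{i+1}(p)$ and therefore do not affect the excess risk sum). A secondary technical point, which I would state as a standing convention, is that for $\alpha^*(p)=\sigma^{-1}(u(p))$ to lie in $\mathcal{S}$ the link $\sigma$ must be (strictly) increasing, so that $\sigma^{-1}$ preserves the order of the cumulative probabilities; this is the standard convention for cumulative link models and is implicitly assumed throughout the discussion following Eq.~\eqref{eq:propodds}.
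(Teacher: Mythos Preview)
Your proposal is correct and follows essentially the same route as the paper: both arguments use the explicit minimizer $\alpha^*_i(p)=\sigma^{-1}(u_i(p))$ from Eq.~\eqref{eq:cl_optimal} together with the excess-risk identity \eqref{eq:excess_risk} to reduce consistency to the sign condition \eqref{eq:condition_cl}. The paper's proof is terser and simply takes the minimizer of Eq.~\eqref{eq:cl_optimal} as given, whereas you additionally justify its uniqueness via the cross-entropy reparametrization and flag the boundary and monotonicity issues; these are genuine technical points that the paper glosses over, so your version is more complete, but the underlying argument is the same.
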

\begin{proof}
  ($\implies$) Suppose CL is consistent but $\sigma^{-1}$ does not verify Eq.~\eqref{eq:condition_cl}, i.e., there exists a $\xi \neq 1/2$ such that $\sigma^{-1}(\xi)(2 \xi - 1) \leq 0$. We consider a probability distribution $P$ such that $u_1(p) = \xi$ for all $p \in \Delta^k$. In that case, by Eq.~\eqref{eq:cl_optimal} $\alpha^*_1(p) = \sigma^{-1}(\xi)$ and so this has a sign opposite to the Bayes decision function $2 \xi - 1$. By Eq.~\eqref{eq:excess_risk} this implies that $L(\alpha^*) - L^* \geq 2 \xi - 1 > 0$. The last inequality implies that $\alpha^*$ does not reach the minimal risk, contradiction since CL is consistent by assumption.

  $(\impliedby)$ Let $0<i<k$. For $u_i(p) \neq 1/2$, $\alpha^*_i(p) = \sigma^{-1}(u_i(p))$ by Eq.~\eqref{eq:cl_optimal} agrees in sign with $2 u_i(p) - 1$ and so by Lemma~\ref{lemma:bayes_decision_function} has minimal risk. If $u_i(p) = 1/2$, then in light of Eq.~\eqref{eq:conditional_risk} the risk is the same no matter the value of $\alpha^*_i(p)$. We have proven that $\alpha^*(p)$ has the same risk as a Bayes decision function, hence the CL model is consistent. This completes the proof.
\end{proof}
The previous theorem captures the notion that the inverse of the link function should agree in sign with $2 t - 1$.
When the link function is the sigmoid function, i.e., $\sigma(t) = 1 / (1 + e^{-t})$ this surrogate is convex and its inverse link function is given by the logit function, which verifies the assumptions of the theorem and hence is consistent. Its optimal decision function is given by
\begin{wrapfigure}{r}{0.45\textwidth}
\centering  \includegraphics[width=0.40\textwidth]{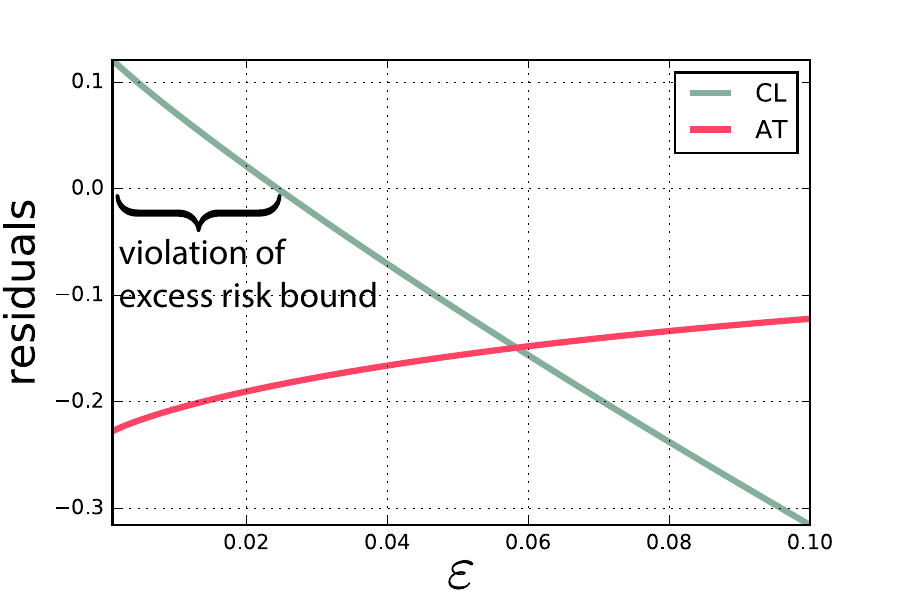}
  \caption{counterexample for the CL excess risk bound.}\label{fig:counterexample}
  \vspace{-20pt}
\end{wrapfigure}
$$\alpha_i^*(p) = \log \left( \frac{u_i(p)}{1 - u_i(p)}\right)\quad,$$
which coincides with the logistic AT surrogate. Despite the similarities between both surrogates, we have not been able to derive excess risk bounds for this surrogate since the separability properties of the AT are not met in this case. Furthermore, it is possible to construct a counter example that the $\gamma$-transform for the Logistic AT ($\gamma(\theta) = \theta^2 / 2$) loss does not yield a valid risk bound in this case. To see this, let $k=3$ and $p, \alpha$ be as follows:
$$
p = (1 - 2 \varepsilon,  1.5 \varepsilon,  0.5 \varepsilon),\qquad \alpha = (-0.1, -0.05) \quad.
$$
For these values we can compute the excess risk and excess surrogate risk as
$$
\begin{aligned}
L(\alpha, p) - L^* &= \sum_{i=1}^2 (2 u_i(p) - 1) = 2 - 6 \varepsilon \\
\mathcal{A}(\alpha, p) - \mathcal{A}^* &= \sum_{i=1}^{3}p_i \psi_{\text{CL}}(i, \alpha) - \sum_{i=1}^{3}p_i \log(p_i)
\end{aligned}
$$
If the risk bound is satisfied, then the residuals, defined as
$$
\text{residuals} = \gamma\left(\frac{L(\alpha, p) - L^*}{k-1}\right) - \frac{\mathcal{A}(\alpha, p) - \mathcal{A}^*}{k-1}
$$
must be always negative. However, as it can be seen in Figure~\ref{fig:counterexample}, the residuals are increasing as $\varepsilon$ goes to 0. Also, in the region $\varepsilon < 0.02$ the residuals become positive and hence the inequality does not hold.

We finish our treatment of the CL surrogate by stating the convexity of the logistic CL loss, which despite being a fundamental property of the loss, has not been proven before to the best of our knowledge.

\begin{lemma}\label{thm:convex}
The logistic CL surrogate, $\alpha \mapsto \psi_{\text{CL}}(y, \alpha)$, is convex on $\mathcal{S}$ for every value of $y$.
\end{lemma}

\begin{proof}
$\psi_{\text{CL}}(1, \alpha)$ and $\psi_{\text{CL}}(k, \alpha)$ are convex because they are log-sum-exp functions. It is thus sufficient to prove that $\psi_{\text{CL}}(i, \cdot)$ is convex for $1 < i< k$. For convenience we will write this function as $f(a, b) = -\log\left( \frac{1}{1 + \exp{(a)}} -
\frac{1}{1 + \exp{(b)}}\right)$, where $a > b$ is the domain of definition.

By factorizing the fraction inside $f$ to a common denominator, $f$ can
equivalently be written as $- \log(\exp(a) - \exp(b)) + \log(1 + \exp(a)) +
\log(1 + \exp(b))$. The last two terms are convex because they can be written
as a log-sum-exp. The convexity of the first term, or equivalently the  \mbox{log-concavity} of the function $f(a, b) = {\exp(a) - \exp(b)}$ can be
settled by proving the positive-definiteness of the matrix $Q = \nabla f(a, b)\nabla f(a, b)^T - f(a, b)\nabla^2f(a, b)$ for all $(a, b)$ in the
domain $\{b > a\}$~\citep{boyd2004convex}. In our case,
\begin{equation*}
Q =
\begin{pmatrix}
\exp(a + b) & -\exp(a + b) \\
- \exp(a + b) & \exp(a + b)
\end{pmatrix} =
\exp(a + b)\begin{pmatrix}
1 & -1 \\
- 1 & 1
\end{pmatrix}
\enspace ,
\end{equation*}
which is a positive semidefinite matrix with eigenvalues $2 \exp(a + b)$ and $0$. This
proves that $Q$ is positive semidefinite and thus $\psi_{\text{CL}}(i, \cdot)$ is a convex function.
\end{proof}

{\bfseries Least absolute deviation}. We will now  prove consistency of the least absolute deviation (LAD) surrogate. Consistency of this surrogate was already proven for the case $k=3$ by~\citet{Ramaswamy2012}. For completeness, we provide here an alternative proof for an arbitrary number of classes.

\begin{theorem}\label{thm:consistency_lad}
  The least absolute deviation surrogate is consistent.
\end{theorem}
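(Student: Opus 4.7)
The plan is to verify the pointwise criterion from Lemma~\ref{lemma:characterization_Fisher}: for every $p \in \Delta^k$, every minimizer of $A(\cdot, p)$ must satisfy $L(\cdot, p) = L^*(p)$. The LAD surrogate depends on $\alpha \in \mathcal{S}$ only through its first component, and by the transformation preceding Eq.~\eqref{eq:LAD_definition} the admissible decision vectors are parameterized by a single real number $\beta = 3/2 - \alpha_1$. Substituting this into the surrogate reduces the conditional risk to
$$A(\alpha, p) = \sum_{i=1}^k p_i \abs{i - \beta},$$
a piecewise linear convex function of $\beta$ whose slope equals $2 u_j(p) - 1$ on each open interval $(j, j+1)$ for $j \in \{1, \ldots, k-1\}$, equals $-1$ on $(-\infty, 1)$, and equals $+1$ on $(k, \infty)$. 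Hence the set of minimizers is nonempty, bounded, and coincides with the set of weighted medians of the distribution $p$ on $\{1,\ldots,k\}$, namely those $\beta^* \in [1, k]$ satisfying $P(Y < \beta^*) \leq 1/2$ and $P(Y > \beta^*) \leq 1/2$.

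Second, I would translate a minimizer back into a prediction. Using the equivalence $\alpha_i < 0 \iff \beta > i + 1/2$, the rule~\eqref{eq:pred} gives $\text{pred}(\alpha) = j$ precisely when $\beta \in (j - 1/2, j + 1/2]$ for $1 < j < k$, with boundary cases $\text{pred}(\alpha) = 1$ for $\beta \leq 3/2$ and $\text{pred}(\alpha) = k$ for $\beta > k - 1/2$. I would then verify that this rounding rule always maps a real weighted median to an integer median of $p$: if $\beta^* \in (j, j+1)$ lies strictly between two consecutive integers, the weighted median inequalities force $u_j(p) = 1/2$, which immediately makes both $j$ and $j+1$ satisfy $u_{r-1}(p) \leq 1/2 \leq u_r(p)$; if $\beta^* = j$ is itself an integer, the one-sided slope conditions at $j$ give $u_{j-1}(p) \leq 1/2 \leq u_j(p)$ directly. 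In either case the rounded prediction $r$ satisfies $u_{r-1}(p) \leq 1/2 \leq u_r(p)$.

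Finally, the excess conditional risk formula~\eqref{eq:excess_risk} derived in the proof of Lemma~\ref{lemma:bayes_decision_function} shows that any prediction $r$ with $u_{r-1}(p) \leq 1/2 \leq u_r(p)$ is Bayes optimal: the indices $i$ on which the corresponding sign pattern disagrees with $\underline{\alpha}(p)$ are exactly those with $u_i(p) = 1/2$, and each contributes zero to the sum $\sum_{i \in I} \abs{2 u_i(p) - 1}$. Therefore $L(\alpha, p) = L^*(p)$ for every minimizer of $A(\cdot, p)$, and Lemma~\ref{lemma:characterization_Fisher} yields Fisher consistency. The main delicacy lies in bookkeeping the non-uniqueness of the weighted median when several cumulative probabilities $u_j(p)$ equal $1/2$: one must check that every element of the resulting interval of minimizers, after rounding, still lands in the set of integer medians rather than escaping to a non-Bayes prediction.
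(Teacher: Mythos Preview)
Your argument is correct and follows essentially the same route as the paper: both identify the minimizers of the LAD conditional risk with the (interval of) weighted medians of $p$, and then check that the prediction associated to any such median is Bayes-optimal via the excess-risk formula~\eqref{eq:excess_risk}. The paper packages the second step as a proof by contradiction on the set $I=\{i:\alpha_i^*(p)(2u_i(p)-1)<0\}$, deriving a clash with the median inequalities, whereas you compute the piecewise-linear slopes directly and show $u_{r-1}(p)\leq 1/2\leq u_r(p)$ for the rounded prediction $r$; these are the same verification, merely reorganized. One small wording point: with the paper's strict definition of $I$ that set is actually empty (indices with $u_i(p)=1/2$ give product zero, not negative), so your phrase ``are exactly those with $u_i(p)=1/2$'' slightly overstates things, though your conclusion $\sum_{i\in I}|2u_i(p)-1|=0$ is unaffected.
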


\begin{proof}
Recall that for $y \in \mathcal{Y}, \alpha \in \mathcal{S}$, the LAD surrogate is given by
$$
\psi_{\text{LAD}}(y, \alpha) = \abs{y + \alpha_1 - \frac{3}{2}} \quad.
$$
The pointwise surrogate risk is then given by
$$
A(\alpha, p) = \sum_{i=1}^k p_i \psi_{\text{LAD}}(y, \alpha) = \EE_{Y \sim p}\left[\,\abs{Y + \alpha_1 - \frac{3}{2}}\right] \quad,
$$
where $Y\!\sim\! p$ means that $Y$ is distributed according to a multinomial distribution with parameter $p \in \Delta^k$. By the optimality conditions of the median, a value that minimizes this conditional risk is given by
$$
\alpha^*_1(p) \in \text{Median}_{Y \sim p}\left(\frac{3}{2} - Y\right) \quad,
$$
where $\text{Med}$ is the median, that is, $\alpha^*_1(p)$ is any value that verifies
$$
P\left(\frac{3}{2} - Y \leq \alpha^*_1(p)\right) \geq \frac{1}{2} \text{ and } P\left(\frac{3}{2} - Y \geq \alpha^*_1(p)\right) \geq \frac{1}{2} \quad.
$$
We will now prove that LAD is consistent by showing that $L(\alpha^*(p), p) = L(\underline{\alpha}(p), p)$, where $\underline{\alpha}$ is the Bayes decision function described in Lemma~\ref{lemma:bayes_decision_function}. Let $r^* = \text{pred}(\underline{\alpha}(p))$ and $I$ denote the set $I = \{i: \alpha^*_i(p) (2 u_i(p) - 1) < 0 \}$. Suppose this set is non-empty and let $i \in I$. We distinguish the cases $\alpha^*_i(p) > 0$ and $\alpha^*_i(p) < 0$:

\begin{itemize}
\item $\alpha^*_i(p) < 0$. By Eq.~\eqref{eq:lad_transform}, $\alpha^*_i$ and $\alpha^*_1$ are related by $\alpha^*_i = i-1 + \alpha^*_1$. Then it is verified that
$$
\begin{aligned}
P\left(\frac{3}{2} - Y \geq \alpha^*_1(p)\right) &= P\left(\frac{3}{2} - Y \geq \alpha_i^* - i + 1\right) = P\left(\frac{1}{2} + i - \alpha^*_i \geq Y\right)  \\
&\geq P\left(\frac{1}{2} + i \geq Y\right) = u_i(p) \quad.
\end{aligned}
$$
By assumption, $\alpha^*_i(p) (2 u_i(p) - 1) < 0$, which implies $u_i(p) > 1/2$. Hence, by the above we have that $P\left(\frac{3}{2} - Y \geq \alpha^*_1(p)\right) > 1/2$. At the same time, by the definition of median, $P\left(\frac{3}{2} - Y \geq \alpha^*_1(p)\right) \leq 1/2$, contradiction.

\item $\alpha^*_i(p) > 0$. Using the same reasoning as before, it is verified that
$$
\begin{aligned}
P\left(\frac{3}{2} - Y \leq \alpha^*_1(p)\right) &= P\left(\frac{3}{2} - Y \leq \alpha_i^* - i + 1\right) = P\left(\frac{1}{2} + i - \alpha^*_i \leq Y\right)  \\
&\geq P\left(\frac{1}{2} + i \leq Y\right) = 1 - u_i(p) \quad.
\end{aligned}
$$
By assumption $u_i(p) < 1/2 \implies P\left(\frac{3}{2} - Y \leq \alpha^*_1(p)\right) > 1/2$. At the same time, by the definition of median, $P\left(\frac{3}{2} - Y \geq \alpha^*_1(p)\right) \leq 1/2$, contradiction.
\end{itemize}

Supposing $I$ not empty has lead to contradictions in both cases, hence $I = \emptyset$. By Eq.~\eqref{eq:excess_risk}, $L(\alpha^*(p), p) = L(\underline{\alpha}(p), p)$, which concludes the proof.
\end{proof}

\subsection{Squared error}\label{scs:squared_error}

We now consider the squared error, defined as
$$
\ell(y, \alpha) = (y - \text{pred}(\alpha))^2 \quad,
$$
and its surrogate, the least squares loss,
$$
\psi_{\text{LS}}(y, \beta) = \left(y +\alpha_1 - \frac{3}{2} \right)^2 \quad.
$$
As for the case of the least absolute deviation, the only difference between the loss and its surrogate is the presence of the prediction function in the first.

We will now prove that the least squares surrogate is consistent with the squared error: we will first derive a value of $\alpha$, denoted $\underline{\alpha}$ that reaches the Bayes optimal error and then show that the solution to the least squares surrogate agrees in sign with $\underline{\alpha}$ and so also reaches the Bayes optimal error.
\begin{lemma}\label{lemma:bayes_predictor_squared} Let $\underline{\alpha} \in \RR^{k-1}$, be defined component-wise as
$$
\underline{\alpha}(p)_i = i - \left(\sum_{j=1}^k j p_j\right) + \frac{1}{2} \quad.
$$
Then, $\underline{\alpha}$ is a Bayes predictor, that is, $\underline{\alpha} \in \argmin_\alpha L(\alpha, p)$.
\end{lemma}
\begin{proof} Following our proof for the absolute error, we will show that $L(\alpha) - L(\underline{\alpha})$ is always non-negative, which implies that $\underline{\alpha}$ reaches the Bayes optimal error.

Let $r$ and $s$ be defined as $r = \text{pred}(\alpha)$, $s = \text{pred}(\underline{\alpha})$. Then we have the following sequence of equalities:
\begin{equation}
\begin{aligned}\label{eq:excess_risk_square}
L(\alpha) - L(\underline{\alpha}) &= \sum_{i=1}^k p_i ((i - r)^2 - (i - s))^2) \\
&= \sum_{i=1}^k p_i( -2 i r + r^2 + 2 i s - s^2) \qquad \text{ (developing the square)} \\
&= r^2 - s^2 - 2 (r - s)\sum_{i=1}^k i p_i \qquad \text{ (using $\sum_{i=1}^k p_i = 1$)} \\
\end{aligned}
\end{equation}
We will now distinguish three cases: $r > s$, $r < s$ and $r = s$. For each of these cases we will show that $L(\alpha) - L(\underline{\alpha}) \geq 0$, which implies that $\underline{\alpha}$ has a smaller risk than any other $\alpha$, and hence is a Bayes predictor.
\begin{itemize}
\item $s > r$. This implies that $s$ is greater than 1 since $s > r \geq 1$. We can conclude from the definition of prediction function in Eq.~\eqref{eq:pred} that the $(s-1)$-th coordinate of $\underline{\alpha}$ is strictly negative. By the definition of $\underline{\alpha}$ we have that
$
s - 1 - \sum_{i=1}^k i p_i + (1/2) < 0
$ or equivalently $\sum_{i=1}^k i p_i > s - (1/2)$. Using this in Eq.~\eqref{eq:excess_risk_square} we have the following sequence of inequalities:
$$
\begin{aligned}
L(\alpha) - L(\underline{\alpha}) &= r^2 - s^2 - 2 (r - s)\sum_{i=1}^k i p_i  = r^2 - s^2 + \underbrace{(- 2) (r - s)}_{\text{positive}}\sum_{i=1}^k i p_i  \\
&\geq r^2 - s^2 + (- 2) (r - s)(s - \frac{1}{2}) \qquad \text{ (since $\sum_{i=1}^k i p_i \geq s - (1/2)$)} \\
&= \underbrace{r^2 + s^2 - 2 r s}_{(r - s)^2} + r - s =  \underbrace{(r - s)}_{< 0} \underbrace{(r - s + 1)}_{\leq 0} \\
&\geq 0
\end{aligned}
$$
\item $s < r$. We follow a similar argument but reversing the inequalities. The assumption in this case implies that $s$ is smaller than $k$ since $s < r \leq k$. We can conclude from the definition of prediction function in Eq.~\eqref{eq:pred} that the $s$-th coordinate of $\underline{\alpha}$ is positive. By the definition of $\underline{\alpha}$ we have that
$
s - \sum_{i=1}^k i p_i + (1/2) \geq 0
$ or equivalently $-\sum_{i=1}^k i p_i \geq -s - (1/2)$. Using this in Eq.~\eqref{eq:excess_risk_square} we have the following sequence of inequalities:
$$
\begin{aligned}
L(\alpha) - L(\underline{\alpha}) &= r^2 - s^2 - 2 (r - s)\sum_{i=1}^k i p_i  = r^2 - s^2 + \underbrace{2 (r - s)}_{\text{positive}}(-\sum_{i=1}^k i p_i)  \\
&\geq r^2 - s^2 + 2 (r - s)(-s - \frac{1}{2}) \qquad \text{ (since $-\sum_{i=1}^k i p_i \geq -s - (1/2)$)} \\
&= \underbrace{r^2 + s^2 - 2 r s}_{(s - r)^2} + s - r  = \underbrace{(s - r)}_{< 0} \underbrace{(s - r + 1)}_{\leq 0} \\
&\geq 0
\end{aligned}
$$
\item $s = r$. Since the excess risk only depends on $r, s$ and not on the particular values of $\alpha, \underline{\alpha}$ (see Eq.~\eqref{eq:excess_risk_square}), the excess risk is the same in this case, i.e., $L(\alpha) - L(\underline{\alpha}) = 0$.
\end{itemize}
\end{proof}

\begin{theorem}The least squares surrogate $\psi_{\text{LS}}$ is consistent with respect to the squared error.
\end{theorem}
\begin{proof} The proof follows closely that of the least absolute deviation (Theorem~\ref{thm:consistency_lad}) replacing the median by the expected value. In this case, the pointwise error can be written as
$$
A(\alpha, p) = \sum_{i=1}^k p_i \psi_{\text{LS}}(y, \alpha) = \EE_{Y \sim p}\left( Y + \alpha_1 - \frac{3}{2}\right)^2 \quad,
$$
and by the optimality conditions of the expected value, we have that $\alpha_1^* = \EE(\frac{3}{2} - Y|X=x) = \sum_{i=1} p_i (\frac{3}{2} - i)$. Hence, the value for a general $\alpha^*_i$ is given by $i + \alpha_1^*$ (Eq.~\ref{eq:lad_transform}), which can be written as
$$
\alpha^*_i = i  - \sum_{i=1}^k  i p_i + \frac{1}{2} \quad.
$$
This corresponds to the Bayes predictor of Lemma~\ref{lemma:bayes_predictor_squared} and hence is consistent.
\end{proof}

We will encounter another consistent loss function in the Experiments section, when we derive a variant of the AT loss function that is consistent with respect to the square error.
\subsection{Surrogates of the 0-1 loss}\label{scs:zero_one_surrogates}

Perhaps surprisingly, some popular models for ordinal regression turn out to be surrogates, not of the absolute error, but of the 0-1 loss. In this section we focus on the 0-1 loss and we provide a characterization of consistency for the immediate threshold loss function.

{\bfseries Immediate thresholds (IT)}. In this case, the conditional risk can be expressed as
$$
A(\alpha, p) = \sum_{i=1}^k p_j \psi_{\text{IT}}(j, \alpha) = \sum_{i=1}^{k-1} p_{i} \varphi(-\alpha_i) + p_{i+1} \varphi(\alpha_i)
$$

As pointed out by~\citet{Keerthi2003}, and contrary to what happened for AT surrogate, the constraints can not be ignored in general when computing $A^*$. Results that rely on this property such as the excess error bound of Theorem~\ref{thm:excess_risk} will not translate directly for the IT loss. However, we will still be able to characterize the functions $\varphi$ that lead to a consistent surrogate, in a result analogous to Theorem~\ref{thm:all_threshold} for the AT surrogate.

\begin{theorem} \label{thm:consistency_immediate_thresholds}
Let $\varphi$ be convex. Then the IT surrogate is Fisher consistent with respect to the 0-1 loss if and only if $\varphi$ is differentiable at $0$ and $\varphi'(0) < 0$.
\end{theorem}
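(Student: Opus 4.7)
The plan is to prove the two directions separately, reducing necessity to binary classification and handling sufficiency by a detailed analysis of the constrained optimum.

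For the necessity direction ($\Longrightarrow$), I would specialize to $k=2$. Taking any probability distribution on $k$ classes with $p_3=\cdots=p_k=0$, the set $\mathcal{S}$ collapses to $\RR$ and the IT surrogate reduces exactly to the standard binary margin surrogate $p_1\varphi(\alpha_1)+p_2\varphi(-\alpha_1)$, while the $0$-$1$ conditional risk reduces to the usual binary $0$-$1$ risk. Fisher consistency of IT for all $k$ therefore implies consistency in the $k=2$ case; by the classical characterization of \citet{Bartlett2003} for convex margin-based losses, this forces $\varphi$ to be differentiable at zero with $\varphi'(0)<0$.

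For the sufficiency direction ($\Longleftarrow$), I would exploit that the IT conditional risk decomposes into a sum of $k-1$ weighted binary conditional risks, one per threshold,
\begin{equation*}
A(\alpha,p)=\sum_{i=1}^{k-1}\bigl[p_i\varphi(\alpha_i)+p_{i+1}\varphi(-\alpha_i)\bigr].
\end{equation*}
Unlike the AT case (Lemma~\ref{lemma:factor_Ai}), the monotonicity constraints in $\mathcal{S}$ genuinely bind, so the unconstrained per-threshold optima need not be monotone. I would analyze a constrained minimizer $\alpha^{*}\in\mathcal{S}$ via its KKT/subdifferential conditions, with non-negative multipliers $\lambda_i$ on the constraints $\alpha_i\le\alpha_{i+1}$ together with complementary slackness. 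These conditions force a pool-adjacent-violators decomposition: $\{1,\dots,k-1\}$ partitions into maximal blocks $B_1,\dots,B_m$ on which $\alpha^{*}$ takes constant values $\beta_1<\cdots<\beta_m$.

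Telescoping the coordinate-wise optimality equations inside a block $B=\{a,\dots,b\}$ of common value $\beta$ yields the optimality condition of a single pooled binary problem with weights $\bigl(\sum_{i=a}^{b}p_i,\sum_{i=a}^{b}p_{i+1}\bigr)$. By the binary consistency of $\varphi$, one then gets $\sign(\beta)=\sign(p_a-p_{b+1})$. Since $r^{*}:=\text{pred}(\alpha^{*})$ is the boundary index between the last block with $\beta<0$ and the first block with $\beta\ge 0$, the block sign rule immediately yields $p_{r^{*}}\ge p_a$ for the starting index $a$ of any block with $\beta<0$ and $p_{r^{*}}\ge p_{b+1}$ for the ending index $b$ of any block with $\beta\ge 0$.

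The hard part is upgrading these boundary-only comparisons to the global inequality $p_{r^{*}}\ge p_j$ for every $j$, i.e.\ $r^{*}\in\argmax_j p_j$, which is what Fisher consistency against the $0$-$1$ loss demands. This should follow from the irreducibility of each PAV block (no internal split produces sub-blocks whose optimal values still respect the monotone order), which translates into inequalities on the partial prefix sums of $(p_a,\dots,p_{b+1})$ inside each block; carefully chaining these within-block inequalities with the order $\beta_1<\cdots<\beta_m$ across blocks should give the global mode property of $r^{*}$. As an alternative route, the whole sufficiency could be obtained more cleanly as a corollary of a general consistency theorem for the generalized all-threshold surrogate of Section~\ref{sct:extension_other_loss}, paralleling the deferral strategy used for Theorem~\ref{thm:all_threshold}.
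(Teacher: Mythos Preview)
Your alternative route---deferring to the general GAT result of Section~\ref{sct:extension_other_loss}---is precisely what the paper does: the proof of Theorem~\ref{thm:consistency_immediate_thresholds} is a one-line invocation of Theorem~\ref{thm:main_thm} with $\ell$ the $0$--$1$ loss.

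Your primary sufficiency argument is genuinely different from the paper's proof of Theorem~\ref{thm:main_thm}, and the gap you flag is real. The paper never tries to establish the global mode property $p_{r^*}\ge p_j$ for all $j$. Instead it argues by contradiction: writing the Lagrangian $G(\alpha)=A(\alpha,p)+\sum_i\lambda_i(\alpha_i-\alpha_{i+1})$, it sums $\partial G/\partial\alpha_i\bigr|_{\alpha_i=0}$ over the index range between $r=\text{pred}(\alpha^*)$ and the Bayes label $r^*$. The multipliers telescope, complementary slackness kills the endpoint $\lambda_{r-1}$ (since $\alpha^*_{r-1}<0\le\alpha^*_r$), and what remains is $\varphi'(0)\sum_i(u_i-v_i)$ plus a nonnegative multiplier; by Lemma~\ref{lemma:aux_main_thm} this equals $\varphi'(0)$ times minus the excess $0$--$1$ risk, namely $\varphi'(0)(p_r-p_{r^*})$, which is strictly positive. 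Convexity forces each of those partial derivatives at $0$ to have the opposite sign, a contradiction. The point is that the paper only ever compares $p_r$ with $p_{r^*}$; it never needs $p_{r^*}$ to dominate indices outside the interval between $r$ and $r^*$, which is exactly the step your PAV-block chaining would have to supply and which you correctly identify as nontrivial.

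A minor issue with your necessity reduction: for fixed $k>2$, putting mass on $\{1,2\}$ does not collapse $\mathcal{S}$ to $\RR$---the IT risk still carries the term $p_2\varphi(\alpha_2)$ and the ordering constraints $\alpha_1\le\alpha_2\le\cdots$ remain. The paper's necessity proof (inside Theorem~\ref{thm:main_thm}) puts mass on $\{k-1,k\}$ and compares $\alpha^*$ to the vector obtained by zeroing only the last coordinate, which cleanly isolates the binary piece without disturbing the constraints.
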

\begin{proof} As for the AT surrogate, this can be seen as a particular case of Theorem~\ref{thm:main_thm} with $\ell$ the 0-1 loss. We will postpone the proof until Section~\ref{sct:extension_other_loss}.
\end{proof}

\subsection{Extension to other admissible loss functions}\label{sct:extension_other_loss}

In this section we will show that the AT and IT surrogates can be seen as particular instances of a family of loss functions for which we will be able to provide a characterization of consistency.

The admissibility criterion that we require on the loss function is that this is of the form $\ell(i, \alpha) = g\left(\,\abs{i - \text{pred}(\alpha)}\right)$, where $g$ is a non-decreasing function. Intuitively, this condition implies that labels further away from the true label are penalized more than those closer by. This criterion is general enough to contain all losses considered before such as the absolute error, the squared error and (albeit in a degenerate sense) 0-1 loss. A very similar condition is the V-shape property of~\citep{Li2007}. This property captures the notion that the loss should not decrease as the predicted value moves away from the true value by imposing that $\ell$ verifies $\ell(y, \alpha) \leq \ell(y, \alpha')$ for $\abs{y - \text{pred}(\alpha)} \leq \abs{y - \text{pred}(\alpha')}$. The only difference between the two conditions is that our admissibility criterion adds a symmetric condition, i.e., $\ell$ verifies that the loss of predicting $a$ when the true label is $b$ is the same as the loss of predicting $b$ when the true label is $a$, which is not necessarily true for V-shaped loss functions. We conjecture that the results in this section are valid for general V-shaped loss functions, although for simplicity we have only proven results for symmetric V-shaped loss functions.
For the rest of this section, we will consider that $\ell$ is a loss function that verifies the admissibility criterion.

We define $c_i$ by $c_0 = g(0)$ and $c_i = g(i) - g(i-1)$ for $0<i\geq k$. With this notation it is easy to verify by induction that $g$ can be written as a sum of $c_i$ with the formula $g(i) = \sum_{j=1}^{i} c_j$ and that $c_i \geq 0$ by the admissibility property. Following the same development as in Eq.~\eqref{eq:development_absolute_error},
any admissible loss function can be written as a sum of $c_i$ as
\begin{equation}\label{eq:general_loss}
\ell(y, \alpha) = g\left( \sum_{i=1}^{y-1}\llbracket  \alpha_i \geq 0 §\rrbracket + \sum_{i=y}^{k-1}\llbracket  \alpha_i < 0 §\rrbracket \right) = \sum_{i=1}^{y-1} c_{y-i} \llbracket  \alpha_i \geq 0 §\rrbracket + \sum_{i=y}^{k-1} c_{i-y+1}\llbracket  \alpha_i < 0 §\rrbracket \quad.
\end{equation}

In light of this, it seems natural to define a surrogate for this general loss function by replacing the 0-1 loss with a surrogate as the hinge or logistic that we will denote by $\varphi$. This defines a new surrogate that we will denote \emph{generalized all threshold (GAT)}:
$$
\psi_{\mathrm{GAT}}(y, \alpha) := \sum_{i=1}^{y-1} \varphi(-\alpha_i) c_{y-i} + \sum _{i=y}^{k-1} \varphi(\alpha_i) c_{i - y+1} \quad.
$$

In the special case of the absolute error, $c_i$ is identically equal to 1 and we recover AT loss of Eq.~\eqref{eq:def_all_thresh}. Likewise, for the zero-one loss, $c_i$ will be one for $i \in \{y - 1, y\}$ and zero otherwise, recovering the IT loss of Eq.~\eqref{eq:immediate_threshold}. We will now present the main result of this section, which has Theorems~\ref{thm:all_threshold} and~\ref{thm:consistency_immediate_thresholds} as particular cases.

\begin{theorem}\label{thm:main_thm}
   Let $\varphi$ be convex. Then the GAT surrogate is consistent if and only if $\varphi$ is differentiable at $0$ and $\varphi'(0) < 0$.
\end{theorem}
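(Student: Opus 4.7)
The plan is to apply the pointwise characterization of Fisher consistency from Lemma~\ref{lemma:characterization_Fisher} and establish both directions. Starting from Eq.~\eqref{eq:general_loss} and interchanging the order of summation yields, for every $\alpha \in \mathcal{S}$ and $p \in \Delta^k$,
\begin{equation*}
A(\alpha, p) = \sum_{i=1}^{k-1} F_i(\alpha_i), \qquad L(\alpha, p) = \phi(\text{pred}(\alpha)),
\end{equation*}
where $F_i(\beta) := w_i^+(p)\varphi(-\beta) + w_i^-(p)\varphi(\beta)$ with $w_i^+(p) := \sum_{y>i} p_y c_{y-i}$, $w_i^-(p) := \sum_{y \leq i} p_y c_{i-y+1}$, and $\phi(r) := \sum_y p_y g(|y-r|)$; in particular $L^*(p) = \min_r \phi(r)$, attained on $R^* := \argmin_r \phi(r)$. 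A direct expansion yields the telescoping identity
\begin{equation*}
\phi(r+1) - \phi(r) = w_r^-(p) - w_r^+(p) .
\end{equation*}

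For the necessity direction I would specialize to $k=2$: since the admissibility of $\ell$ forces $c_1 = g(1) > 0$, GAT collapses to the standard binary margin surrogate $c_1\varphi$, so Fisher consistency of GAT implies classification-calibration of $\varphi$, which, by the classical characterization of~\citet{Bartlett2003} for convex margin losses, is equivalent to $\varphi$ being differentiable at $0$ with $\varphi'(0) < 0$. For the sufficiency direction, assume these conditions on $\varphi$. Each $F_i$ is convex with $F_i'(0) = (w_i^-(p) - w_i^+(p))\varphi'(0)$, and replicating the argument in the proof of Theorem~\ref{thm:all_threshold} shows that every minimizer of $F_i$ over $\RR$ lies strictly in $(0,\infty)$ when $w_i^-(p) > w_i^+(p)$, strictly in $(-\infty,0)$ when $w_i^-(p) < w_i^+(p)$, and contains $0$ when $w_i^-(p) = w_i^+(p)$. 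Because $A(\cdot,p)$ is a separable sum of convex terms, any minimizer $\alpha^* \in \mathcal{S}$ admits a pool-adjacent-violators representation: contiguous level sets $J_1 = [1,a_1], \ldots, J_m = [a_{m-1}+1,k-1]$ with strictly increasing common values $\beta_1 < \cdots < \beta_m$, where each $\beta_t$ minimizes the pooled objective $W_{J_t}^+\varphi(-\beta) + W_{J_t}^-\varphi(\beta)$ with $W_{J_t}^\pm := \sum_{i\in J_t} w_i^\pm(p)$. Combining this with the sign characterization and the telescoping identity gives $\sign(\beta_t) = \sign(W_{J_t}^- - W_{J_t}^+) = \sign(\phi(a_t+1) - \phi(a_{t-1}+1))$, with $a_0 := 0$ and $a_m := k-1$.

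The main obstacle is to conclude that $\text{pred}(\alpha^*) \in R^*$, for which I argue by contradiction: suppose some $r^* \in R^*$ satisfies $a_{t-1}+1 < r^* < a_t+1$ with both $\phi(r^*) < \phi(a_{t-1}+1)$ and $\phi(r^*) < \phi(a_t+1)$ (the tied/boundary case places $r^*$ directly at a level-set boundary). Splitting $J_t$ at $r^*-1$ produces two sub-pools whose respective $W^- - W^+$ sums are $\phi(r^*) - \phi(a_{t-1}+1) < 0$ and $\phi(a_t+1) - \phi(r^*) > 0$, so their pool minimizers satisfy $\beta^{(1)} < 0 < \beta^{(2)}$; at least one of these differs from $\beta_t$, yielding the strict inequality
$\sum_{i \in [a_{t-1}+1, r^*-1]} F_i(\beta^{(1)}) + \sum_{i \in [r^*, a_t]} F_i(\beta^{(2)}) < \sum_{i \in J_t} F_i(\beta_t)$. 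Any loss of external monotonicity (i.e.\ $\beta^{(1)} < \beta_{t-1}$ or $\beta^{(2)} > \beta_{t+1}$) is repaired by clipping the offending value to $\beta_{t-1}$ or $\beta_{t+1}$ respectively; convexity of the corresponding $F_i$'s together with $\beta_{t-1} < \beta_t$ (resp.\ $\beta_t < \beta_{t+1}$) implies this clipping does not reverse the strict improvement, so the resulting $\tilde\alpha \in \mathcal{S}$ satisfies $A(\tilde\alpha, p) < A(\alpha^*, p)$, contradicting optimality. Hence every $r^* \in R^*$ lies at a level-set boundary, and on the boundary sequence $\{1, a_1+1, \ldots, a_{m-1}+1, k\}$ the function $\phi$ is V-shaped (strictly decreasing while $\beta_t < 0$, strictly increasing while $\beta_t > 0$), so its minimum is attained exactly at $\text{pred}(\alpha^*) = a_{t^*}+1$ with $t^* := \max\{t : \beta_t < 0\}$. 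Thus $\text{pred}(\alpha^*) \in R^*$ and $L(\alpha^*, p) = \phi(\text{pred}(\alpha^*)) = L^*(p)$. The remaining technical subtleties concern non-strictly convex $\varphi$ (such as the hinge, where pool minimizers can form intervals) and the ties $w_i^-(p) = w_i^+(p)$ forcing $\beta_t = 0$; in both cases the same conclusion holds because the admissible values of $\beta_t$ still induce predictions inside $R^*$, but the sign analysis at pool boundaries must be performed with extra care.
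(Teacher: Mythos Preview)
Your sufficiency argument takes a genuinely different route from the paper's. The paper never invokes the pool-adjacent-violators structure; instead it writes the Lagrangian $G(\alpha)=A(\alpha,p)+\sum_i\lambda_i(\alpha_i-\alpha_{i+1})$, notes that complementary slackness forces $\lambda_{r-1}=0$ at the prediction boundary (since $\alpha^*_{r-1}<0\le\alpha^*_r$), and then sums $\partial G/\partial\alpha_i|_{\alpha_i=0}$ over the range between $r$ and $r^*$. Your telescoping identity $\phi(r+1)-\phi(r)=w_r^- - w_r^+$ is exactly the content of the paper's Lemma~\ref{lemma:aux_main_thm}, and with it the paper obtains a sign contradiction in two lines. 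Your PAV route is conceptually appealing because it makes the block structure of the constrained minimizer explicit, but the clipping step you describe is genuinely delicate: when $\varphi$ is piecewise linear it is possible for the first sub-pool objective to be flat on $[\beta_{t-1},\beta_t]$ while $\beta_t$ simultaneously minimizes the second sub-pool, in which case clipping erases the strict improvement you obtained before clipping. This is fixable (one can merge with the neighbouring block and re-run the argument, or argue directly that a strictly better \emph{feasible} point exists by moving only inside the block), but as written it is a gap, and you flag it yourself at the end without resolving it. The KKT argument in the paper sidesteps all of this.

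Your necessity direction, however, does not work as stated. The theorem is asserted for a fixed $k$ and a fixed admissible $g$; you cannot ``specialize to $k=2$'' when $k>2$ is given. What you would need is to embed a binary problem inside the $k$-class one by taking $p$ supported on two adjacent labels and then argue that the constrained minimizer of the full $(k-1)$-dimensional surrogate still reduces to the binary calibration problem at the relevant coordinate. That reduction is not free: the ordering constraints couple all $k-1$ coordinates, and the inactive coordinates contribute terms of the form $w_i^{\pm}\varphi(\pm\alpha_i)$ whose minimizers may conflict with the monotonicity constraint. The paper handles this by choosing $p=(0,\ldots,0,\tfrac12-\varepsilon,\tfrac12+\varepsilon)$, so that only coordinate $k-1$ carries the binary structure while the others have $u_i=0$; it then uses two subgradients $g_1\ge g_2$ of $\varphi$ at $0$ and shows that if $g_1>g_2$ one can strictly decrease $A$ by replacing $\alpha^*_{k-1}$ with $0$ (which preserves membership in $\mathcal{S}$), forcing differentiability; a second short argument gives $\varphi'(0)<0$. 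Your invocation of \citet{Bartlett2003} is morally the right idea, but you must carry it out inside the $k$-class constrained problem rather than literally in the $k=2$ problem.
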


Before presenting the proof of this theorem, we will need some auxiliary results. Unlike for the absolute error, in this case we will not be able to derive a closed form of the optimal decision function. However, we will be able to derive a formula for the excess risk in terms of the functions $u, v: \Delta^k \to \RR^{k-1}$, defined as
$$
u_i(p) = \sum_{j=1}^i p_j c_{i-j+1} \quad
v_i(p) = \sum_{j=i+1}^{k} p_j c_{j-i} \quad.
$$
Note that we have overloaded the function $u$ defined in Section~\ref{sct:absolute_error_surrogates}. This is not a coincidence, as when $\ell$ is the absolute error both definitions coincide. Using this notation, the surrogate risk can be conveniently written as
$$
\begin{aligned}
A(\alpha, p) &= \sum_{i=1}^k p_i \psi_{\mathrm{GAT}}(i, \alpha) = \sum_{i=1}^k p_i \left(\sum_{j=1}^{i-1} \varphi(-\alpha_j) c_{i-j} + \sum_{j=i}^{k-1} \varphi(\alpha_j) c_{j-i+1} \right) \\
&= \sum_{i=1}^{k-1} v_i(p) \varphi(-\alpha_{i}) + u_i(p) \varphi(\alpha_{i})  \quad,
\end{aligned}
$$
and we have the following formulas for the excess risk:

\begin{lemma}\label{lemma:aux_main_thm} Let ${p \in \Delta^k}$, $\alpha \in \mathcal{S}$, $r = \text{pred}(\alpha)$ and $r^*$ be the label predicted by any Bayes decision function at $p$. Then, it is verified that
  $$
  L(\alpha, p) - L^*(p) = \begin{dcases}
  \sum_{i=r}^{r^*} (v_i(p) - u_i(p)) \quad \text{ if } r < r^* \quad \\
  \sum_{i=r^*}^{r-1} (u_i(p) - v_i(p))  \quad \text{ if } r > r^* \quad.\\
  \end{dcases}
  $$
\end{lemma}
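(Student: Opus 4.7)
The plan is to mirror, on the target-loss side, the same "swap the order of summation'' manipulation that the authors used just above the lemma to rewrite the surrogate conditional risk $A(\alpha,p)$ in terms of $u_i(p)$ and $v_i(p)$. Starting from the decomposition of $\ell$ in Eq.~\eqref{eq:general_loss}, I would take expectation over $Y$ under $p$ and then exchange the sums over $y$ and $i$ to obtain
$$
L(\alpha,p) \;=\; \sum_{i=1}^{k-1}\Big[\,v_i(p)\,\llbracket \alpha_i\ge 0\rrbracket \;+\; u_i(p)\,\llbracket \alpha_i<0\rrbracket\,\Big],
$$
where the inner sums in $y$ are recognized as $v_i(p)=\sum_{y=i+1}^k p_y c_{y-i}$ and $u_i(p)=\sum_{y=1}^i p_y c_{i-y+1}$.

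Next, I would use the fact that $\alpha\in\mathcal{S}$ is non-decreasing and $r=\mathrm{pred}(\alpha)$, so $\alpha_i<0$ precisely for $i\le r-1$ and $\alpha_i\ge 0$ precisely for $i\ge r$. This collapses the indicator combination into a clean split
$$
L(\alpha,p) \;=\; \sum_{i=1}^{r-1} u_i(p) \;+\; \sum_{i=r}^{k-1} v_i(p).
$$
Applying the same identity to any Bayes decision function, which by definition achieves $L^*(p)$ and whose prediction I denote $r^*$, gives an analogous expression with $r$ replaced by $r^*$.

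The conclusion then follows by straightforward subtraction. If $r<r^*$, the $u$-block of $L(\alpha,p)$ is shorter than that of $L^*(p)$ by the indices $r,\dots,r^*-1$, while its $v$-block is longer by exactly those same indices, giving $L(\alpha,p)-L^*(p)=\sum_{i=r}^{r^*-1}(v_i(p)-u_i(p))$; the case $r>r^*$ is symmetric and yields $\sum_{i=r^*}^{r-1}(u_i(p)-v_i(p))$. If $r=r^*$ the two sums are empty and the excess risk vanishes, which is consistent with both formulas.

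The argument is essentially bookkeeping; there is no real obstacle beyond being careful with indices and the convention for $\mathrm{pred}$. The only mildly subtle point is to recognize that $L(\alpha,p)$ depends on $\alpha$ only through $r=\mathrm{pred}(\alpha)$, so that the minimizer of $L$ over $\mathcal{S}$ is unambiguously determined by the prediction $r^*$, which is what lets us identify $L^*(p)$ with $\sum_{i=1}^{r^*-1}u_i(p)+\sum_{i=r^*}^{k-1}v_i(p)$ without having to exhibit a canonical Bayes $\underline{\alpha}$ as was done in Lemma~\ref{lemma:bayes_decision_function} for the absolute-error case.
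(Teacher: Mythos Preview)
Your argument is correct and is essentially the same as the paper's. Both routes establish the key identity
\[
L(\alpha,p)=\sum_{i=1}^{r-1}u_i(p)+\sum_{i=r}^{k-1}v_i(p)
\]
(which is Eq.~\eqref{eq:generic_risk} in the paper) and then subtract the corresponding expression for $r^*$. The only cosmetic difference is that you obtain this identity by taking the expectation of the indicator decomposition in Eq.~\eqref{eq:general_loss} and swapping sums, whereas the paper expands $g(|r-i|)=\sum_{j=1}^{|r-i|}c_j$ directly; the two computations are equivalent. Incidentally, your upper index $r^*-1$ in the case $r<r^*$ agrees with the paper's own proof and is the correct one; the $r^*$ appearing in the displayed statement of the lemma is a typo.
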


\begin{proof}
  The risk can be expressed in terms of $u_i$ and $v_i$ (where the dependence of $p$ is implicit) as
  \begin{equation}\label{eq:generic_risk}
  \begin{aligned}
  L(\alpha) &= \sum_{i=1}^{k} p_i g(\,\abs{r - i}) = \sum_{i=1}^{r-1} p_i g(r - i) + \sum_{i=r+1}^{k} p_i g(i - r) \\
  &=  \sum_{i=1}^{r-1} p_i \sum_{j=1}^{r-i} c_j + \sum_{i=r+1}^{k} p_i \sum_{j=1}^{i - r} c_j \\
  &= \sum_{i=1}^{r-1} u_i + \sum_{i=r}^{k-1} v_i \quad ,
  \end{aligned}
\end{equation}
  hence for $r < r^*$,
  $$
  0 \leq L(\alpha) - L^* = \sum_{i=1}^{r-1} u_i + \sum_{i=r}^{k-1} v_i - \left(\sum_{i=1}^{r^*-1} u_i + \sum_{i=r^*}^{k-1} v_i \right) = \sum_{i=r}^{r^*-1} (v_i - u_i) \quad,
  $$
  and similarly for $r > r^*$
  $$
  0 \leq L(\alpha) - L^* = \sum_{i=1}^{r-1} u_i + \sum_{i=r}^{k-1} v_i - \left(\sum_{i=1}^{r^*-1} u_i + \sum_{i=r^*}^{k-1} v_i \right) = \sum_{i=r^*}^{r-1} (u_i - v_i) \quad,
  $$
\end{proof}

\begin{proof}[Proof of Theorem \ref{thm:main_thm}]
  This proof loosely follows the steps by~\citet[Theorem 6]{Bartlett2003}, with the difference that we must ensure that the optimal value of the surrogate risk lies within $\mathcal{S}$ and adapted to consider multiple classes. We denote by $\alpha^*$ the value in $\mathcal{S}$ that minimizes $A(\cdot)$, by $r$ the prediction at $\alpha^*$ and by $r^*$ the prediction of any Bayes decision function, where the dependence on $p$ is implicit.

  $(\implies)$ We first prove that consistency implies $\varphi$ is differentiable at $0$ and $\varphi'(0) < 0$. We do so by proving that the subdifferential at zero is reduced to a single vector. Since $\varphi$ is convex, we can find subgradients $g_1 \geq g_2$ of $\varphi$ at zero  such that, for all $\beta \in \RR$
  $$
  \begin{aligned}
    \varphi(\beta) &\geq g_1 \beta + \varphi(0) \\
    \varphi(\beta) &\geq g_2 \beta + \varphi(0) \quad.\\
  \end{aligned}
  $$
  Then we have for all $i$
  \begin{equation}\label{eq:A_calibrated}
  \begin{aligned}
    v_i \varphi(-\beta) + u_i \varphi(\beta) &\geq v_i (g_1 \beta + \varphi(0)) + u_i (- g_2 \beta + \varphi(0)) \\
    &= (v_i g_1 - u_i g_2) \beta + (v_i + u_i) \varphi(0) \\
    &= \beta \left( \frac{1}{2}(v_i+u_i) (g_1 - g_2) + \frac{1}{2}(v_i-u_i)(g_1 + g_2)\right) + (v_i + u_i) \varphi(0)\quad.
  \end{aligned}
\end{equation}
For $0 < \varepsilon < 1/2$, we will consider the following vector of conditional probabilities
$$
p = \left(0, \cdots, 0, \frac{1}{2} - \varepsilon, \frac{1}{2} + \varepsilon \right) \quad,
$$
from where $u_i$ and $v_i$ take the following simple form
$$
u_i = \begin{cases}
p_{k-1} c_1 &\text{ if } i = k-1 \\
0 &\text{ otherwise }
\end{cases}, \quad
v_i = \begin{cases}
p_{k} c_1 &\text{ if } i = k-1 \\
p_{k-1} c_{k-i-1} + p_{k} c_{k-i} &\text{ otherwise }
\end{cases}
$$
hence by Eq.~\eqref{eq:generic_risk} consistency implies $r=k$ and so we must have $\alpha_{k-1}^* < 0$.

Let now $\tilde{\alpha} \in \mathcal{S}$ be a vector that equals $\alpha^*$ in all except the last component, which is zero (i.e., $\tilde{\alpha}_{k-1} = 0$). We will now prove that if $g_1 > g_2$ then $A(\tilde{\alpha}, p) < A(\alpha^*, p)$ leading to a contradiction. For the particular choice of $u_{k-1}, v_{k-1}$ above, equation~\eqref{eq:A_calibrated} can be simplified to
$$v_{k-1} \varphi(-\beta) + u_{k-1} \varphi(\beta) \geq \beta \left[ \frac{1}{2} (g_1 - g_2) + \varepsilon(g_1 + g_2)\right] + (v_{k-1} + u_{k-1}) \varphi(0)\quad.
$$
Since by assumption $g_1 > g_2$, it is always possible to choose $\varepsilon$ small enough such that the quantity inside the square brackets is strictly positive. Special casing at $\beta = \alpha^*_{k-1}$ and using $\alpha^*_{k-1} < 0$ yields the following inequality:
$$v_{k-1} \varphi(-\alpha^*_{k-1}) + u_{k-1} \varphi(\alpha^*_{k-1}) \geq  (v_{k-1} + u_{k-1}) \varphi(0)\quad.
$$
We then have the following sequence of inequalities:
\begin{equation*}
\begin{aligned}
A(\alpha^*, p) &= \sum_{i=1}^{k-1} v_i(p)\varphi(-\alpha^*_i) + u_i \varphi(\alpha^*_i) \\
&\geq \sum_{i=1}^{k-2} \left\{v_i(p)\varphi(-\alpha^*_i) + u_i \varphi(\alpha^*_i)\right\} + (v_{k-1} - u_{k-1})\alpha^*_{k-1} + (v_{k-1} + u_{k-1})\varphi(0) \\
&\qquad \text{ (by last inequality)}\\
&> \sum_{i=1}^{k-2} \left\{v_i(p)\varphi(-\alpha^*_i) + u_i \varphi(\alpha^*_i)\right\} + v_{k-1}\varphi(0) + u_{k-1}\varphi(0) \\
&= A(\tilde{\alpha}, p)\quad,
\end{aligned}
\end{equation*}
which results in $A(\alpha^*, p) > A(\tilde{\alpha}, p)$, contradiction since $\alpha^*$ is the value with lowest surrogate risk. This implies that if the GAT loss is consistent, then $\varphi$ is differentiable at $0$. To see that we must also have $\varphi'(0) < 0$, notice that from Eq.~\eqref{eq:A_calibrated} we have
  $$
  A_i(\beta) \geq (v_i - u_i) \varphi'(0) \beta + A_i(0) \quad.
  $$
  But for any $v_i > u_i$ and $\beta < 0$, if $\varphi'(0) \geq 0$, then this expression is greater than $A_i(0)$. Hence, if GAT is consistent then $\varphi'(0) < 0$, which concludes one of the implications of the proof.

\hfill

  $(\impliedby)$ We now prove that if $\varphi$ is differentiable at $0$ and $\varphi'(0) < 0$, then GAT is consistent.

The first order optimality conditions states that there exists $\lambda_i \geq 0$ such that the optimal value of $A(\alpha, p)$ subject to $\alpha \in \mathcal{S}$ is the minimizer of the following unconstrained function:
  $$
  G(\alpha) = A(\alpha, p) + \sum_{i=1}^{k-1}\lambda_i (\alpha_{i} - \alpha_{i+1}) \quad.
  $$
  We show that assuming GAT is not consistent (i.e., $L(\alpha^*) > L^*$) leads to a contradiction and hence GAT must be consistent.

We start by computing the partial derivative of $G$ at zero:
  $$
  \frac{\partial G}{\partial \alpha_{i}}\Bigr|_{\alpha_{i} = 0} =  (u_{i} - v_{i})\varphi'(0) - \lambda_{i-1} + \lambda_{i} \quad,
  $$
  were for convenience $\lambda_0 = 0$.  Note that $\alpha^*_{r-1} < 0 \leq \alpha^*_r$ verifies by definition of prediction function. Hence, the inequality constraints are verified with strict inequality and by complementary slackness $\lambda_{r-1} = 0$. Suppose first $r < r^*$. Then, the addition of all partial derivatives between $r$ and $r^*$ yields
  $$
    \sum_{i = r}^{r^*} \frac{\partial G}{\partial \alpha_i}\Bigr|_{\alpha_i = 0} =
    \left(\sum_{i=r}^{r^*} u_i - v_i \right) \varphi'(0) + \lambda_{r^*} \quad,
  $$
   which by Lemma~\ref{lemma:aux_main_thm} is strictly positive. Consider the convex real-valued function of $\alpha_i \to G(\alpha_1^*, \ldots, \alpha_{r-1}^*, \ldots, \alpha_i, \ldots)$, that is, the function $G(\alpha^*)$ restricted to $\alpha_i, i \geq r$. Since $\alpha^*_{i} \geq 0$ for all $i \geq r$ by the definition of prediction function, and knowing the subdifferential of a convex function is a monotonous, this implies that $\partial G / \partial \alpha_{i\geq r} \leq 0$ at $\alpha_i = 0$, contradiction.

  Now we suppose $r > r^*$. Then, the addition of all partial derivatives between $r^*$ and $r$ yields
$$
    \sum_{i = r^*}^r \frac{\partial G}{\partial \alpha_i}\Bigr|_{\alpha_i = 0} =\left(\sum_{i=r^*}^{r} u_i - v_i \right) \varphi'(0) - \lambda_{r^*-1}\quad,
$$
   which by Lemma~\ref{lemma:aux_main_thm} is strictly negative. As before, we consider the function the function $G(\alpha^*)$ restricted to $\alpha_i, i \geq r$. By the definition of prediction function, $\alpha^*_{i < r} < 0$ and so the fact that the subdifferential of a real-valued function is monotonous, we have that $\partial G / \partial \alpha_{i\geq r} \geq 0$ at $\alpha_i = 0$, contradiction.
\end{proof}

\section{Threshold-based decision functions and parametric consistency}\label{scs:parametric_consistency}
In this section we revisit the assumption that the optimal decision function can be estimated independently at each point $x \in \mathcal{X}$. This is implicitly assumed on most consistency studies, however in practice models often enforce inter-observational constraints (e.g. smoothness). In the case of ordinal regression it is often the case that the decision functions are of the form
\begin{equation}\label{eq:thresholds}
f(x) = (\theta_1 - g(x), \theta_2 - g(x), \ldots, \theta_{k-1} - g(x)) \quad,
\end{equation}
where $(\theta_1, \ldots, \theta_{k-1})$ is a non-decreasing vector (i.e., its components form a non-decreasing sequence) known as the \emph{vector of thresholds} (hence the appearance of the name thresholds in many models) and $g$ is a measurable function. We will call decision functions of this form \emph{threshold-based} decision functions. All the examined models are of this form with the exception of the least absolute deviation are commonly constrained to this family of decision functions~\citep{Keerthi2003,Rennie,lin2006large,Shashua}.

The main issue with such decision functions is that since the vector of thresholds is estimated from the data, it is no longer true that the optimal decision function can be estimated independently at each point. This implies that the pointwise characterization of Fisher consistency described in Lemma~\ref{lemma:characterization_Fisher} does no longer hold when restricted to this family and hence the consistency proofs in previous sections no longer hold.

Let $\mathcal{F}$ be the set of functions of the form of Eq.~\eqref{eq:thresholds}. We will now apply the notion of \emph{$\mathcal{F}$-consistency}  or \emph{parametric consistency} of~\citep{shi2015hybrid} to the threshold-based setting. This is merely the notion of Fisher consistency where the decision functions are restricted to a family of interest:

\begin{definition}
  ($\mathcal{F}$-{\bfseries Consistency}) Given a surrogate loss function $\psi: \mathcal{Y}\times \mathcal{S} \to \RR$, we will say that the surrogate loss function $\psi$ is $\mathcal{F}$-consistent with respect to the loss $\ell: \mathcal{Y}\times \mathcal{S} \to \mathcal{Y}$ if for every probability distribution over $X \times Y$ it is verified that every minimizer of the $\psi$-risk reaches the optimal risk in $\mathcal{F}$, that is,
  $$f^* \in \argmin_{f \in \mathcal{F}} \mathcal{A}(f) \implies \mathcal{L}(f^*) = \inf_{f \in \mathcal{F}}\mathcal{L}(f) \quad .$$
\end{definition}

We will show that by imposing additional constraints on the probability distribution $P$ we will be able to derive $\mathcal{F}$-consistency for particular surrogates. In the following theorem we give one sufficient condition for $\mathcal{F}$-consistency of the logistic all threshold and the logistic CL. This condition involves the odds-ratio, defined as
$$
R_i(x) = \frac{u_i(\eta(x))}{1 - u_i(\eta(x))} \quad,
$$
where $\eta(x)$ is the vector of conditional probabilities defined in Section~\ref{scs:full_conditional_risk} and $u_i$ is the vector of conditional probabilities.

\begin{theorem}
  If the quotient of successive $R_i$ is independent of $x$ for all $0<i<k-1$, that is, if
  $$
  \frac{R_i(x)}{R_{i+1}(x)} = a_i \quad \forall x \in \mathcal{X} \quad,
  $$
  for some real number $a_i$,
  then the logistic all threshold and the logistic CL are $\mathcal{F}$-consistent.
\end{theorem}
\begin{proof}
  It will be sufficient to prove that under the constraints on $P$, the optimal decision function for the unconstrained problem belongs to $\mathcal{F}$. In Section~\ref{sct:absolute_error_surrogates} we derived the optimal decision function for the logistic all threshold and the logistic CL. Hence, we can write
  $$
  \alpha^*_i(\eta(x)) - \alpha^*_{i+1}(\eta(x)) = \log\left( \frac{u_i(\eta(x))}{1 - u_i(\eta(x))}\right) - \log\left( \frac{u_{i+1}(\eta(x))}{1 - u_{i+1}(\eta(x))}\right) = \log \left(\frac{R_i(x)}{R_{i+1}(x)}\right) \quad.
  $$
  It is clear that $\alpha$ is a threshold-based decision function of the form Eq.~\eqref{eq:thresholds} if and only if $\alpha_i - \alpha_{i+1}$ does not depend on $x \in \mathcal{X}$. Given the above, we can guarantee that the optimal $\alpha$ belongs to $\mathcal{F}$ if the last term is independent of $x$. One can then easily recognize the quotient of odds-ratio of the theorem's conditions and conclude that $\alpha^*_i(\eta(x)) - \alpha^*_{i+1}(\eta(x))$ is independent of $x$. This concludes the proof.
\end{proof}

This sufficient condition is admittedly a very stringent one on the probability distribution $P$. Unfortunately, a deeper understanding of $\mathcal{F}$-consistency, while an interesting future direction, is outside the scope of the current paper.

\section{Experiments: A novel surrogate for the squared error}\label{scs:experiments}

While the focus of this work is a theoretical investigation of consistency, we have also conducted experiments to study a novel surrogate suggested by the results of the last section. There, we constructed a surrogate that is consistent with any loss function that verifies an admissible criterion. In particular, Theorem~\ref{thm:main_thm} applied to the squared loss yields the following consistent surrogate:
$$
\psi(y, \alpha) = \sum_{i=1}^{y-1} \varphi(-\alpha_i)(2 (y - i) - 1) + \sum_{i=y}^{k-1} \varphi(\alpha_i) (2 (i - y) + 1) \quad.
$$
In principle, any binary loss function can be used for $\varphi$, although in the experiments we set it to the hinge loss function.
To the best of our knowledge, this is a novel surrogate. We compare the cross-validation error of this surrogate on different datasets
 against the least squares surrogate (for which we proved consistency in~\S\ref{scs:squared_error})
where $\beta \in \RR$ and prediction is given by rounding to the closest integer.
In both cases, we consider linear decision functions, i.e.
$$
\alpha = (\theta_1 - \langle w, x \rangle, \ldots, \theta_{k-1} - \langle w, x \rangle) \quad \text{ and } \quad \beta = \langle w, x \rangle \quad.
$$
In each case, the optimal values of $w, \theta$ were found by minimizing the empirical surrogate risk. For the training sample $\{(x_1, y_1), \ldots, (x_n, y_n)\}$, $x_i \in \RR^p$, it yielding the following optimization problems for GAT and least squares, respectively:
\begin{gather*}
\argmin_{\theta \in \mathcal{S}, w \in \RR^p} \sum_{i=1}^n\left\{ \sum_{j=1}^{y_i-1} \varphi(\langle w, x_i \rangle -\theta_j)(2 (y - j) - 1) + \sum_{j=y_i}^{k-1} \varphi(\theta_j-\langle w, x_i \rangle) (2 (j - y) + 1) \right\} \\
\argmin_{w \in \RR^p} \sum_{i=1}^n (y_i - \langle w, x_i \rangle)^2
\end{gather*}

The different datasets that we will consider are described in~\citep{Keerthi2003} and can be download from the authors website\footnote{\url{http://www.gatsby.ucl.ac.uk/~chuwei/ordinalregression.html}. }. We display results for the 9 datasets of SET I using the version of the datasets with 5 bins, although similar results were observed when using the datasets with 10 bins. Given the small dimensionality of the datasets (between 6 and 60) and the comparatively high number of samples (between 185 and 4000), we did not consider the use of regularization.

Performance is computed as the squared error on left out data, averaged over 20 folds.
We report this performance in Figure~\ref{fig:results_GAT}, where it can be seen that the GAT surrogate outperforms LS on 7 out of 9 datasets, although admittedly the difference is small and only statistically significant on 3 datasets. However, this shows that previous theoretical results can be used to generate consistent surrogates that are competitive in a practical scenario.

\begin{figure}[h]
\center \includegraphics[width=0.8\linewidth]{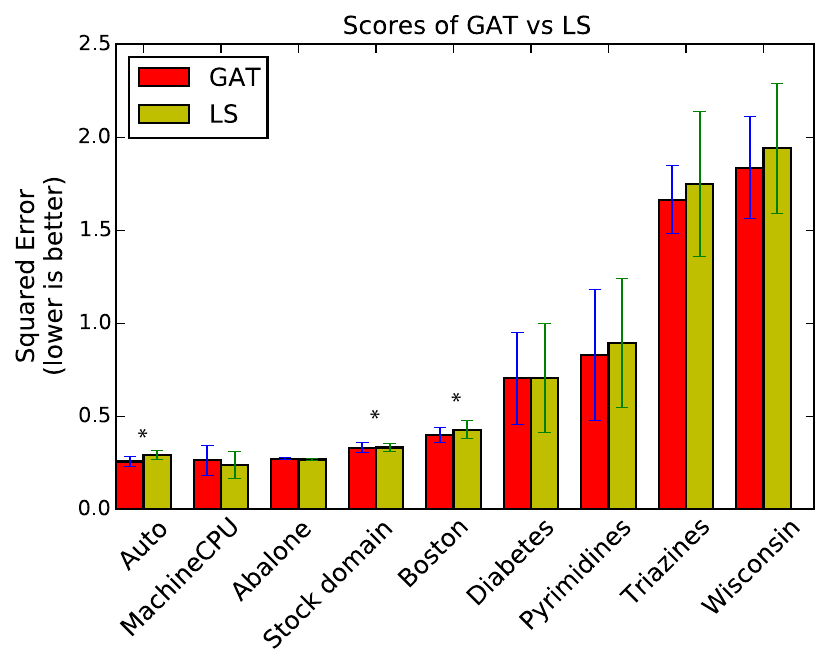}
\caption{Scores of the generalized all threshold (GAT) and least squares (LS) surrogate on 6 different datasets. The scores are computed as the squared error between the prediction and the true labels on left out data, averaged over 20 cross-validation folds.
On 7 out of 9 datasets all the GAT surrogate outperforms the least squares estimator, showing that this surrogate yields a highly competitive model. Datasets for which a Wilcoxon signed-rank test rejected the null hypothesis that the means are equal with $p$-value $< 0.01$ are highlighted by a $*$ symbol over the bars.}\label{fig:results_GAT}
\end{figure}

\section{Conclusions}

In this paper we have characterized the consistency for a rich family of surrogate loss functions used for ordinal regression. Our aim is to bridge the gap between the consistency properties known for classification and ranking and those known for ordinal regression.

We have first described a wide family of ordinal regression methods under the same framework. The surrogates of the absolute error that we have considered are the all threshold (AT), cumulative link (CL), and least absolute deviation (LAD), while the surrogate for the 0-1 loss is the immediate threshold (IT).

For all the surrogates considered, we have characterized its consistency. For AT and IT, consistency was characterized by the derivative of a real-valued convex function at zero (Theorems~\ref{thm:all_threshold} and \ref{thm:consistency_immediate_thresholds} respectively). For CL, consistency was characterized by a simple condition on its link function (Theorem~\ref{thm:consistency_cl}) and for LAD we have extended the proof of~\citet{Ramaswamy2012} to an arbitrary number of classes (Theorem~\ref{thm:all_threshold}). Furthermore, we have proven that AT verifies a decomposability property and using this property we have provided excess risk bounds that generalize those of~\citet{Bartlett2003} for binary classification(Theorem~\ref{thm:excess_risk}).

The derivation we have given when introducing IT and AT are identical except for the underlying loss function. This suggest that both can be seen as special cases of a more general family of surrogates. In Section~\ref{sct:extension_other_loss} we have constructed such surrogate and characterized its consistency with respect to a general loss function that verifies an admissibility condition. Again, the characterization only relies on the derivative at zero of a convex real-valued function. We named this surrogate generalized all threshold (GAT).

In Section~\ref{scs:parametric_consistency} we have turned back to examine one of the assumptions described in the introduction and that is common to the vast majority of consistency studies, i.e., that the optimal decision function can be estimated independently at every sample. However, in the setting of ordinal regression it is common for decision functions to have a particular structure known as threshold-based decision functions and which violates this assumption.
Following~\citep{shi2015hybrid}, we are able to prove a restricted notion of consistency known as $\mathcal{F}$-consistency or parametric consistency on two surrogates by enforcing constraints on the probability distribution $P$. We believe this restricted notion of consistency to be important in practice and we look forward to see consistency studies extended to consider different types of decision functions, such as smooth functions, polynomial functions, etc.

Finally, in Section~\ref{scs:experiments} we provide an empirical comparison for the GAT surrogate. The underlying loss function that we consider in this case is the squared error, in which case GAT yields a novel surrogate. We compare this surrogate against the least squares surrogate in terms of cross-validation error. Our results show that GAT outperforms least squares on 7 out of 9 datasets, showing the pertinence of such surrogate on real-world datasets.

A direction for future work would be to study the related notion of asymptotic consistency for ordinal regression loss functions, similar to the results that already exist for binary classification~\citep{devroye1994strong,Steinwart2002}.

\section{Acknowledgments}
Most of this work was done while FP was a PhD student at the Parietal project-team at INRIA Saclay and acknowledges financial support under grants IRMGroup ANR-10-BLAN-0126-02 and BrainPedia ANR-10-JCJC 1408-01 and the ``Chaire \'Economie des Nouvelles Donn\'ees'', under
the auspices of Institut Louis Bachelier, Havas-Media and Universit\'e Paris-Dauphine.
We would like to thank our colleague Guillaume Obozinski for fruitful discussions.

\bibliography{biblio}
\end{document}